\documentclass{article}

% if you need to pass options to natbib, use, e.g.:
%     \PassOptionsToPackage{numbers, compress}{natbib}
% before loading neurips_2024

\usepackage[final]{neurips_2024}

% ready for submission
% \usepackage{neurips_2024}

% to compile a preprint version, e.g., for submission to arXiv, add add the
% [preprint] option:
    % \usepackage[preprint]{neurips_2024}

% to compile a camera-ready version, add the [final] option, e.g.:
%     \usepackage[final]{neurips_2024}

% to avoid loading the natbib package, add option nonatbib:
%    \usepackage[nonatbib]{neurips_2024}

\usepackage[utf8]{inputenc} % allow utf-8 input
\usepackage[T1]{fontenc}    % use 8-bit T1 fonts
\usepackage{hyperref}       % hyperlinks
\usepackage{url}            % simple URL typesetting
\usepackage{booktabs}       % professional-quality tables
\usepackage{amsfonts}       % blackboard math symbols
\usepackage{nicefrac}       % compact symbols for 1/2, etc.
\usepackage{microtype}      % microtypography
\usepackage{xcolor}         % colors

\usepackage{microtype}
\usepackage{graphicx}
\usepackage[small]{caption}
\usepackage{subcaption}
\usepackage[linesnumbered]{algorithm2e}
\usepackage{lipsum}
\usepackage{times}
\usepackage{soul}
\usepackage{hyperref}
\usepackage{wrapfig}

\usepackage{amsmath}
\usepackage{amssymb}
\usepackage{mathtools}
\usepackage{amsthm}
\usepackage[capitalize,noabbrev]{cleveref}

\usepackage{dcolumn}
\usepackage{siunitx}
\usepackage{booktabs, array}

\usepackage{pifont}
\newcommand{\cmark}{\ding{51}}
\newcommand{\xmark}{\ding{55}}

\theoremstyle{plain}
\newtheorem{theorem}{Theorem}[section]

\theoremstyle{definition}

\theoremstyle{remark}

\newcolumntype{d}[1]{D{.}{.}{#1}}

\title{Preference Alignment with Flow Matching}
% \title{Learning from Preference with Flow Matching}
% \title{Preference-based Learning with Flow Matching}
% \title{Flow Matching for Preference Learning}

% The \author macro works with any number of authors. There are two commands
% used to separate the names and addresses of multiple authors: \And and \AND.
%
% Using \And between authors leaves it to LaTeX to determine where to break the
% lines. Using \AND forces a line break at that point. So, if LaTeX puts 3 of 4
% authors names on the first line, and the last on the second line, try using
% \AND instead of \And before the third author name.

\author{%
    Minu Kim${}^{1}$\thanks{
        Equal contribution.\,\,\,
        ${}^\dagger$Corresponding authors.
    }
    \quad
    Yongsik Lee${}^{1*}$\quad
    Sehyeok Kang${}^{1}$\quad
    Jihwan Oh${}^{1}$\vspace{2pt}\\
    \textbf{Song Chong}${}^{1\dagger}$
    \quad
    \textbf{Se-Young Yun}${}^{1\dagger}$ \vspace{2pt}\\
    ${}^{1}$KAIST AI\vspace{2pt}\\
    \footnotesize
    \texttt{\{minu.kim, dldydtlr93, kangsehyeok0329, ericoh929,}\\
    \footnotesize
    \texttt{songchong, yunseyoung\}@kaist.ac.kr}
}

\begin{document}

\maketitle

%%%%%%%%%%%%%%%%%%%%%%%%%%%%%%%%%%%%%%%%%%%%%%%%%%%%%%%%%%%%
% Abstract
%%%%%%%%%%%%%%%%%%%%%%%%%%%%%%%%%%%%%%%%%%%%%%%%%%%%%%%%%%%%
\begin{abstract}
We present Preference Flow Matching (PFM), a new framework for preference-based reinforcement learning (PbRL) that streamlines the integration of preferences into an arbitrary class of pre-trained models. Existing PbRL methods require fine-tuning pre-trained models, which presents challenges such as scalability, inefficiency, and the need for model modifications, especially with black-box APIs like GPT-4. In contrast, PFM utilizes flow matching techniques to directly learn from preference data, thereby reducing the dependency on extensive fine-tuning of pre-trained models. By leveraging flow-based models, PFM transforms less preferred data into preferred outcomes, and effectively aligns model outputs with human preferences without relying on explicit or implicit reward function estimation, thus avoiding common issues like overfitting in reward models. We provide theoretical insights that support our method’s alignment with standard PbRL objectives. Experimental results indicate the practical effectiveness of our method, offering a new direction in aligning a pre-trained model to preference. Our code is available at \href{https://github.com/jadehaus/preference-flow-matching}{https://github.com/jadehaus/preference-flow-matching}.
\end{abstract}
%%%%%%%%%%%%%%%%%%%%%%%%%%%%%%%%%%%%%%%%%%%%%%%%%%%%%%%%%%%%

\section{Introduction}
\label{sec:intro}

Preference-based reinforcement learning (PbRL) has emerged as a groundbreaking approach with significant contributions to performance improvement~\citep{akrour2011preference, wilson2012bayesian}, particularly in the realm of artificial intelligence where understanding and incorporating human preferences are crucial. Unlike traditional reinforcement learning, which struggles due to the absence of explicit reward functions or the infeasibility of defining comprehensive environmental rewards, PbRL leverages a variety of feedback forms from humans to guide the learning process. This class of PbRL method is often referred to as reinforcement learning from human feedback (RLHF)~\citep{ziegler2019fine, levine2018learning, ouyang2022training}. 
% RLHF has gained a huge success in aligning models with human preferences, especially in the domain of natural language processing~\citep{ouyang2022training}.
% In typical RLHF scenarios, a model is initially trained to approximate a reward function based on human preferences. This reward model is then used to guide the policy learning process, aiming to align agent behaviors more closely with human expectations and values.
% Recently, Direct Preference Optimization(DPO)~\citep{rafailov2024direct} has been recognized for training models without the need for training a reward model, which was typically required in RLHF.

Despite their effectiveness, these methods necessitate fine-tuning pre-trained models to align with user preferences, introducing several challenges such as scalability, accessibility, inefficiency, and the need for model modifications. For instance, with black-box APIs like GPT-4~\citep{openai2024gpt4}, customization based on user preferences is constrained due to restricted access to the underlying model. Moreover, even if fine-tuning were feasible, the large model size results in inefficient training and high resource consumption. 
% To mitigate these inefficiencies, parameter-efficient fine-tuning methods like PERL~\citep{sidahmed2024perl} based on LoRA~\citep{hu2021lora}, which trains only on a small number of parameters, have been proposed. Nevertheless, these approaches still require fine-tuning and modifications to existing pre-trained models, assuming access to these models. 
Aligning black-box models with user preferences remains an under-explored area in research, despite its critical importance and growing demand.

In this line of research, we propose Preference Flow Matching (PFM), which redefines the integration of human preferences by directly learning a preference flow from the less preferred data to the more preferred ones. This direct modeling of preference flows allows our system to better characterize and replicate the marginal distribution of the favored outcomes. We adopt a novel flow matching framework~\citep{lipman2022flow}, which is a simple, intuitive, yet relatively under-explored method for preference alignment. By simply adding a preference flow matching module to black-box models, PFM eliminates the need for fine-tuning the black-box model itself, providing a significant advantage. 

Additionally, our method offers a highly robust approach for preference alignment, by circumventing the need for explicit or implicit reward function estimation. In typical RLHF scenarios, a model is initially trained to approximate a reward function based on human preferences. This reward model is then used to guide the policy learning process, aiming to align agent behaviors more closely with human preferences. However, this approach can introduce complexities and potential biases in translating human preferences into numerical rewards. In particular, learned reward models can often overfit the ground truth preference model, especially in the finite data regime~\citep{azar2023general}. Recent advancements such as Direct Preference Optimization (DPO)~\citep{rafailov2024direct} address the complexities of RLHF by eliminating the need for reward learning. However, these methods still inherently optimize for the reward model, and hence they are also susceptible to reward overfitting. In contrast, PFM directly learns preference flow, thereby removing the need for any reward model assumptions and resolving the challenges associated with reward model learning.

We prove both theoretically and empirically that our method is able to learn an object that is similar to the standard RLHF objectives, while being robust to the preference overfitting observed in traditional RLHF pipelines. We also demonstrate how we can further improve the quality of alignment via iterative flow matching, with theoretical guarantees. Experimentally, we find that while typical RLHF methods and DPO suffer from preference overfitting, our method can robustly align with preference and still achieve comparable performances.

\begin{figure}
\vspace{-10pt}
\centering
\includegraphics[width=\textwidth]{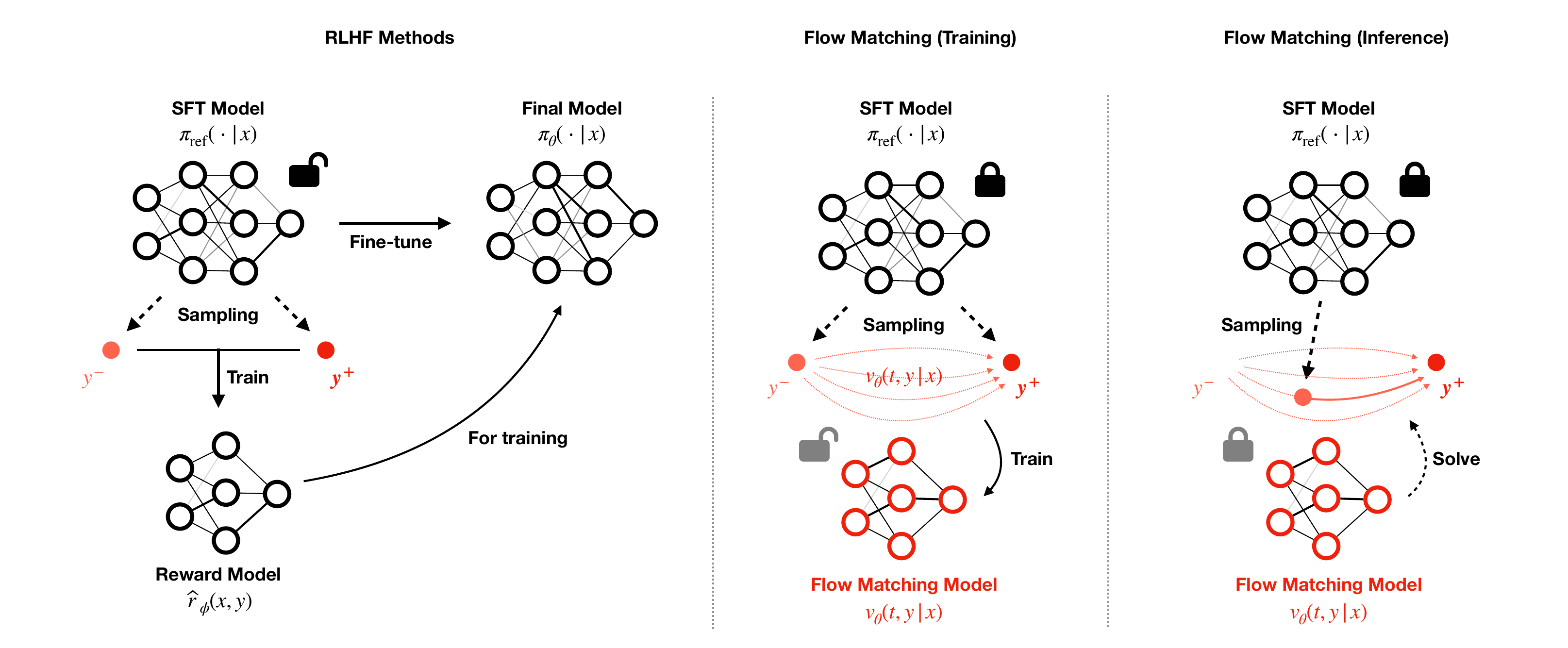}
\caption{Illustration of our PFM framework. In the typical RLHF scenarios (left), we first sample preference data from the supervised fine-tuned (SFT) reference model. A reward model is learned from the collected dataset, either implicitly (as in DPO) or explicitly. The reward model is then used to fine-tune the reference policy to obtain the final model. Our method directly learns the preference flow from the collected preference data, where the flow is represented as a vector field $v_{\theta}$ (middle). For inference, we again sample a point from the reference policy, and improve the quality of alignment by using the trained flow matching model, without the need of fine-tuning the existing reference model (right).}\label{fig:main}
% \caption{Illustration of our framework. We view the collected preference data $\mathcal{D}$ from the pre-trained reference policy $\pi_{\mathrm{ref}}$ is in fact generated from its marginal distributions $p_{0}(\cdot | x)$ and $p_{1}(\cdot | x)$ characterized by the probability model $\mathbb{P}(y^{+} > y^{-} | x)$, respectively. This formulation naturally allows us to apply flow matching techniques to directly learn a probability path from the source distribution $p_{0}$ to the target distribution $p_{1}$. We then solve ODE on the learned vector field to improve the quality of poor samples $y^{-} \sim p_{0}(\cdot | x)$.}\label{fig:main}
\vspace{-10pt}
\end{figure}

\section{Preliminaries}

\subsection{Reinforcement Learning from Human Feedback (RLHF)}
Reinforcement learning from human feedback generally begins with obtaining a pre-trained reference policy $\pi_{\mathrm{ref}}$ that can generate samples $y \sim \pi_{\mathrm{ref}}(\cdot | x)$ given a context $x$. For example, a context $x$ could be a text prompt given by a user, and the sample $y$ could represent an appropriate text response generated by the reference policy $\pi_{\mathrm{ref}}$. We then collect a dataset of $N$ preference pairs $\mathcal{D} = \{(x_{i}, y_{i}^{+}, y_{i}^{-})\}_{i=1}^{N}$, where each $x_{i}$ denotes the context, and each $y_{i}^{+}, y_{i}^{-} \sim \pi_{\mathrm{ref}}(\cdot | x_{i})$ are generated responses to $x_{i}$ and marked as good or bad samples, respectively. Here, we assume that the preference $y_{i}^{+} > y_{i}^{-}$ is generated from a ground-truth reward $r^{*}: X \times Y \to \mathbb{R}_{\geq 0}$, where $X$ and $Y$ are context space and response space, respectively. 
% \begin{equation}\label{eq:bradley_terry}
% \mathbb{P}(y_{i}^{+} > y_{i}^{-} | x_{i}) := \frac{\exp(r^{*}(x_{i}, y_{i}^{+}))}{\exp(r^{*}(x_{i}, y_{i}^{+})) + \exp(r^{*}(x_{i}, y_{i}^{-}))}.
% \end{equation}
The goal of general RLHF is to recover an optimal policy $\pi^{*}$ such that
\begin{equation}
\pi^{*} = \underset{\pi}{\mathrm{argmax}}\ \mathbb{E}_{x} \left( \mathbb{E}_{y \sim \pi(\cdot | x)}\Big( r^{*}(x, y) \Big) - \beta \mathbb{D}_{\mathrm{KL}}\Big(\pi(\cdot | x) \Big\| \pi_{\mathrm{ref}}(\cdot | x)\Big) \right).\label{eq:rlhf_objective}
\end{equation}
RLHF pipelines generally require reward learning. One of the most popular choices of the reward model is the Bradley-Terry model~\citep{bradley1952rank}, which assumes that the preference $y^{+} > y^{-}$ is generated from the probability $\mathbb{P}(y^{+} > y^{-} | x) = \sigma(r^{*}(x, y^{+}) - r^{*}(x, y^{-}))$, where $\sigma$ is the logistic function. Under this model, the general RLHF framework learns the reward model $r_{\phi} \approx r^{*}$ by minimizing the negative log-likelihood:
\begin{equation}\label{eq:reward_model}
% \mathcal{L}_{R}(\phi; \mathcal{D}) &= -\mathbb{E}_{(x, y^{+}, y^{-}) \sim \mathcal{D}}\Big(\log \widehat{\mathbb{P}}_{\phi}( y^{+} > y^{-} | x)  \Big) \\
\mathcal{L}_{R}(\phi; \mathcal{D}):= -\mathbb{E}_{(x, y^{+}, y^{-}) \sim \mathcal{D}}\Big(\log \sigma \big( r_{\phi}(x, y^{+}) - r_{\phi}(x, y^{-})\big)  \Big).
\end{equation}
Once the reward model $r_{\phi}$ is trained, we then use it to optimize for (\ref{eq:rlhf_objective}) to obtain $\pi_{\theta} \approx \pi^{*}$ using standard reinforcement learning algorithms.

There is also a class of \emph{reward-free} methods that eliminates the need of reward learning phase~\citep{rafailov2024direct, azar2023general}. Direct Policy Optimization (DPO)~\citep{rafailov2024direct} is a representative reward-free method that optimizes for (\ref{eq:rlhf_objective}) directly without learning a reward model. 
% By rearranging the objective (\ref{eq:reward_model}) in terms of policy $\pi_{\theta}$, we can obtain a loss function for DPO as follows:
% \begin{equation}\label{eq:dpo_loss}
% \mathcal{L}_{\mathrm{DPO}}(\theta) = -\mathbb{E}_{(x, y^{+}, y^{-}) \sim \mathcal{D}}\bigg(\log \sigma \bigg(\beta \log \frac{\pi_{\theta}(y^{+} | x)}{\pi_{\mathrm{ref}}(y^{+} | x)} - \beta \log \frac{\pi_{\theta}(y^{-} | x)}{\pi_{\mathrm{ref}}(y^{-} | x)} \bigg)\bigg)
% \end{equation}
Although being a reward-free method, DPO implicitly optimizes for the reward function as in (\ref{eq:reward_model}), by replacing $\hat{r}_{\theta}(x, y) = \beta \log (\pi_{\theta}(y | x) / \pi_{\mathrm{ref}}(y | x))$ as the implicit reward estimate.

\subsection{Flow Matching}
Flow matching is a class of generative model, 
% also adopted in the state-of-the-art image generation model Stable Diffusion 3~\citep{esser2024scaling}, 
where given a prior distribution $p_{0}$, we aim to model a target distribution $p_{1}$ from $p_{0}$. A key difference of the flow matching to the other generative models is that the prior $p_{0}$ can be an \emph{arbitrary} distribution, (diffusion, for example, starts from a Gaussian prior $p_{0}$) and that the flow matching algorithm learns to modify the prior distribution $p_{0}$ to the target distribution $p_{1}$ with a neural network.

Throughout, we consider a pair of data distributions over $\mathbb{R}^{d}$ with densities $y^{-} \sim p_{0}$ and $y^{+} \sim p_{1}$, possibly unknown (but able to sample). The flow matching considers the task of fitting a mapping $f: \mathbb{R}^{d} \to \mathbb{R}^{d}$ that transforms $p_{0}$ to $p_{1}$, that is, if $y^{-} \sim p_{0}$, then $f(y^{-}) \sim p_{1}$. Inspired as in the motivation for the diffusion models, one can define a smooth time-varying vector field $u: [0, 1] \times \mathbb{R}^{d} \to \mathbb{R}^{d}$ that defines an ordinary differential equation (ODE),
\begin{equation}\label{eq:ODE}
dy = u(t, y)dt
\end{equation}
where we use the notation $u(t, y)$ interchanably with $u_{t}(y)$. 
Denote the solution of the above ODE by $\phi(t, y)$ (or $\phi_{t}(y)$) with initial condition $\phi_{0}(y) = y$. 
In other words, $\phi_{t}(y)$ is the point $y$ transported along the vector field $u$ from time $0$ to $t$. In order to obtain samples from the target distribution $p_{1}$, we simply compute $\phi_{1}(y)$ where $y \sim p_{0}$. The integration map $\phi_{t}$ induces a pushforward measure $p_{t} \triangleq [\phi_{t}]_{\sharp}(p_{0})$, which is the density of points $y \sim p_{0}$ transported via $u$ from $0$ to $t$. 
% Then it follows that,
% \begin{equation}
% \frac{\partial p}{\partial t} = -\nabla \cdot (p_{t}u_{t}),
% \end{equation}
% i.e., the continuity equation that characterizes time-varying desity $p_{t}$.

To train the vector field $v_{\theta}$ with a neural network that mimics the vector field $u$ of our interest, we can solve for the conditional flow matching objective, as proposed by~\citet{lipman2022flow}:
% \begin{equation}
% \mathcal{L}_{\mathrm{FM}}(\theta) \triangleq \mathbb{E}_{t \sim [0, 1], y \sim p_{t}} \|v_{\theta}(t, y) - u(t, y) \|^{2}.
% \end{equation}
% However, this objective is intractable for general source and target distribution, and we do not have any prior knowledge of what $p_{t}$ and $u_{t}$ are. Instead,~\citet{lipman2022flow} proved that it is possible to construct a target probability path $p_{t}$ via a mixture of the simpler conditional probability paths $p_{t}(\cdot | z)$ given an appropriate condition $z$. In particular, by conditioning the probability path  on both the source and target samples, we can define a new conditional flow matching objective:
\begin{equation}
\mathcal{L}_{\mathrm{CFM}}(\theta) \triangleq \mathbb{E}_{t \sim [0, 1], z \sim q(\cdot), y \sim p_{t}(\cdot | z)} \left( \|v_{\theta}(t, y) - u_{t}(y | z) \|^{2}\right)
\end{equation}
where $q(z) = \pi(y^{-}, y^{+})$ is some coupled distribution of samples $y^{-}, y^{+}$ and $u_{t}(y | z) = y^{+} - y^{-}$ is a straight path from a source sample to a target sample.
% \citet{lipman2022flow} then showed that the loss $\mathcal{L}_{\mathrm{FM}}$ and the loss $\mathcal{L}_{\mathrm{CFM}}$ are equal up to a constant independent of $\theta$. 
The conditional distribution $q(z)$ can be chosen to be an independent coupling of source and target distribution $q(z) = p_{0}(y^{-})p_{1}(y^{+})$~\citep{lipman2022flow}, or the 2-Wasserstein optimal transport plan as proposed by~\citet{tong2023improving}.
\section{Preference Flow Matching}
In this section, we describe how we can use flow matching to learn (human) preferences. In the first subsection, we illustrate our flow matching framework for learning preferences, and compare it with typical RLHF pipelines. Then in~ \cref{sec:toy_example}, we demonstrate our method in a simple 2-dimensional toy experiment. Finally in~\cref{sec:iterative}, we provide an extension of our framework that can iteratively improve the performance, with theoretical guarantees.

\subsection{Flow Matching for Preference Learning}\label{sec:method}
Instead of trying to optimize for the unknown reward $r^{*}$ or the preference probability model $\mathbb{P}(y^{+} > y^{-} | x)$, we simply learn a \textit{flow} from the marginal distribution of \textit{less preferred data} $p_{0}(y^{-} | x)$ to the marginal distribution of \textit{more preferred data} $p_{1}(y^{+} | x)$ by leveraging what is explicitly characterized in the collected preference data:
\begin{align}
p_{0}(y^{-} | x) &\propto \pi_{\mathrm{ref}}(y^{-} | x) \int \pi_{\mathrm{ref}}(y | x) \mathbb{P}(y > y^{-} | x)dy\label{eq:source_distribution}\\
p_{1}(y^{+} | x) &\propto \pi_{\mathrm{ref}}(y^{+} | x) \int \pi_{\mathrm{ref}}(y | x) \mathbb{P}(y^{+} > y | x)dy\label{eq:target_distribution}\\
&= \pi_{\mathrm{ref}}(y^{+} | x)\ \mathbb{E}_{y \sim \pi_{\mathrm{ref}}(\cdot | x)}\bigg(\mathbb{P}(y^{+} > y | x) \bigg)\label{eq:target_solution_form}
\end{align}
In other words, we view that our collected data $\mathcal{D}$ is in fact generated from each of the marginal distributions $y^{-} \sim p_{0}(\cdot | x)$ and $y^{+} \sim p_{1}(\cdot | x)$ obtained from $\mathbb{P}(y^{+} > y^{-} | x)$, respectively. Hence, following the conventions in the literature,~\citep{tong2023improving} we define the flow matching objective for preference dataset $\mathcal{D}$ as follows:
\begin{equation}\label{eq:flow_loss}
\mathcal{L}(\theta) = \mathbb{E}_{t \sim [0,1], z \sim \mathcal{D}, y \sim p_{t}(\cdot | z)}\left( \|v_{\theta}(t, y | x) - u_{t}(y | z) \|^{2} \right)
\end{equation}
where we define the condition $z = (x, y^{+}, y^{-})$, conditional flow $u_{t}(y | z) = y^{+} - y^{-}$, and the probability path $p_{t}(y | z) = \mathcal{N}(y | ty^{+} + (1-t)y^{-},\ \sigma^{2})$. Once we obtain the vector field $v_{\theta}$, we can improve upon the generated negative samples $y^{-} \sim p_{0}(\cdot | x)$ by solving (\ref{eq:ODE}) using an off-the-shelf numerical ODE solver~\citep{runge1895numerische, kutta1901beitrag} to obtain samples $f(y^{-}) \sim p_{1}$.
% Notably, training for the flow matching objective only requires the collected preference data, and does not require additional inference steps from the pre-trained reference policy $\pi_{\mathrm{ref}}$. 
% This is possible since we learn a vector field that transports negative samples from $p_{0}$ to $p_{1}$. 
Specifically, we start from a sample $y^{-}$ with $t=0$, and "flow" along the ODE trajectory using $v_{\theta}$ until $t=1$, to arrive at the target $y^{+}$. Detailed algorithm can be found in~\cref{alg:preflow}. 
Notably, generating improved samples can be done without the need of fine-tuning the existing model, since we learn a separate vector field that transports negative samples from $p_{0}$ to $p_{1}$. Furthermore, we did not require any assumption for the probability model $\mathbb{P}(y^{+} > y^{-} | x)$, so our method can extend to general scenarios that do not adopt the Bradley-Terry model. Our method is outlined in~\cref{fig:main}.

A careful reader might notice that for inference, we require negative samples $y^{-}$ from the marginal distribution $p_{0}$, to obtain aligned samples $y^{+}$. However, this $p_{0}$ is inaccessible during inference step, as we must first acquire preference label $y^{+} > y^{-}$ for samples generated from $\pi_{\mathrm{ref}}$. Instead, we simply start from $y \sim \pi_{\mathrm{ref}}$ as the starting point, and apply flow matching to obtain $f(y) \approx y^{+} \sim p_{1}$. We emphasize that PFM can still robustly generate positive samples, if we assume non-deterministic preferences \textit{i.e.}, $\mathrm{supp}(p_{1}) \supseteq \mathrm{supp}(p_{0})$. We also empirically find that using $\pi_{\mathrm{ref}}$ instead of $p_{0}$ as the source distribution can produce comparable results in practical scenarios. Further details can be found in Appendix~\ref{app:evidence}.

\subsection{Illustrative Example: Preference Generated from 8-Gaussians Density}\label{sec:toy_example}
Here, we demonstrate how our method learns to improve generated samples to better align with the preference, in a simple 2-dimensional toy experiment. We consider a ground truth reward function generated from an 8-Gaussians density as illustrated in~\cref{fig:8_gaussians_a}. We then pre-train a Gaussian mixture model to obtain samples as in~\cref{fig:8_gaussians_c}. The pairwise preference labels are then generated using the ground truth 8-Gaussians reward function, as done in many existing preference-based reinforcement learning (PbRL) settings~\citep{christiano2017deep, ibarz2018reward, shin2023benchmarks}. 
% The sample distribution of the pre-trained model is depicted in~\cref{fig:8_gaussians_c}, and the collected "preferred" data samples are illustrated in~\cref{fig:8_gaussians_d}. 

Once preference data are collected, we first learn a reward model $\widehat{r}_{\phi}$ via (\ref{eq:reward_model}). As can be seen in~\cref{fig:8_gaussians_b}, the learned reward model overfits in the unseen region, which causes the RLHF method to diverge~(\cref{fig:8_gaussians_e}). DPO also fails to learn the correct preference, as can be seen in~\cref{fig:8_gaussians_f}. We note here that DPO is also subjective to the reward overfitting since DPO also implicitly learns to optimize for the reward using the Bradley-Terry model~(\ref{eq:reward_model})~\citep{xu2024dpo, azar2023general}.

However, PFM is free of such reward overfitting issues, as we do not optimize for the reward function using the Bradley-Terry model. Unlike other RLHF methods, our model robustly learns to align with the preference from the provided dataset~(\cref{fig:8_gaussians_g}). Notably, our method does not try to overfit beyond the unseen region, since the learned target distribution from the flow matching model tries to mimic the distribution $p_{1}(y^{+})$ of collected preferred samples. (Compare \cref{fig:8_gaussians_d} and \cref{fig:8_gaussians_g}.) 
% We defer all the experimental details to the Appendix.

\begin{figure}[t!]
    \vspace{-10pt}
    \centering
    % First row of images
    \begin{subfigure}[b]{0.24\textwidth}
        \includegraphics[width=\linewidth]{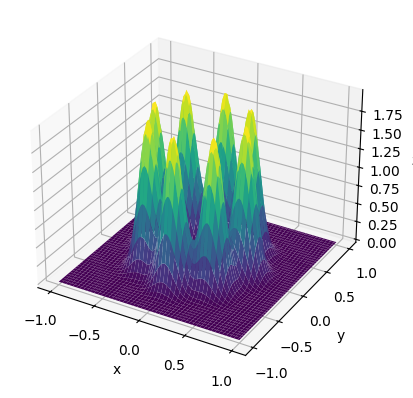}
        \caption{Ground-truth reward}
        \label{fig:8_gaussians_a}
    \end{subfigure}
    \hfill
    \begin{subfigure}[b]{0.24\textwidth}
        \includegraphics[width=\linewidth]{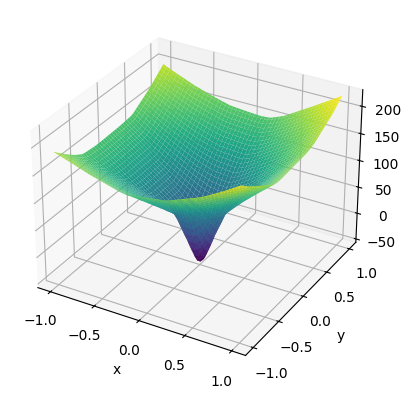}
        \caption{Learned reward model}
        \label{fig:8_gaussians_b}
    \end{subfigure}
    \hfill
    \begin{subfigure}[b]{0.24\textwidth}
        \includegraphics[width=\linewidth]{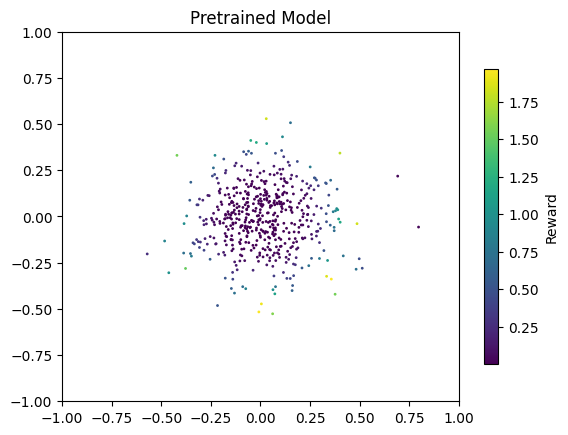}
        \caption{Pre-trained model}
        \label{fig:8_gaussians_c}
    \end{subfigure}
    \hfill
    \begin{subfigure}[b]{0.24\textwidth}
        \includegraphics[width=\linewidth]{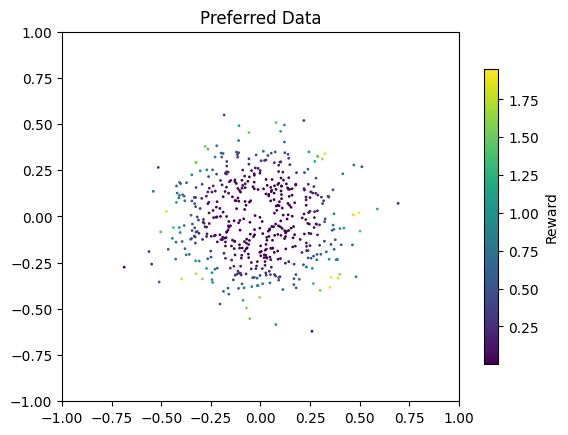}
        \caption{Preferred samples}
        \label{fig:8_gaussians_d}
    \end{subfigure}
    
    % Second row of images
    \begin{subfigure}[b]{0.24\textwidth}
        \includegraphics[width=\linewidth]{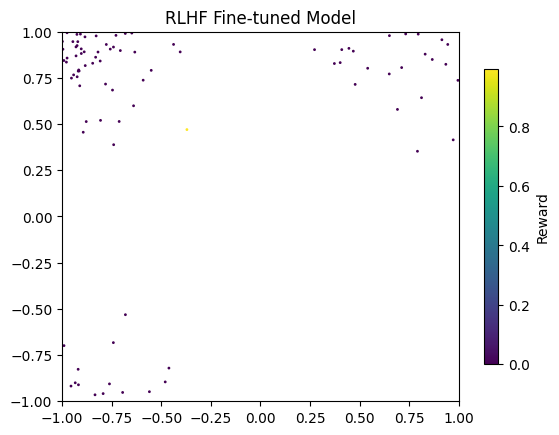}
        \caption{RLHF}
        \label{fig:8_gaussians_e}
    \end{subfigure}
    \hfill
    \begin{subfigure}[b]{0.24\textwidth}
        \includegraphics[width=\linewidth]{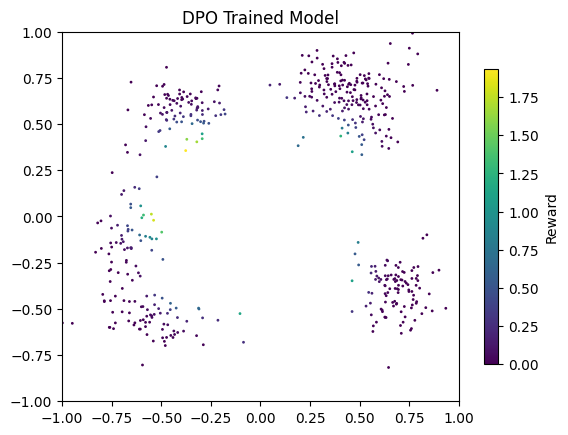}
        \caption{DPO}
        \label{fig:8_gaussians_f}
    \end{subfigure}
    \hfill
    \begin{subfigure}[b]{0.24\textwidth}
        \includegraphics[width=\linewidth]{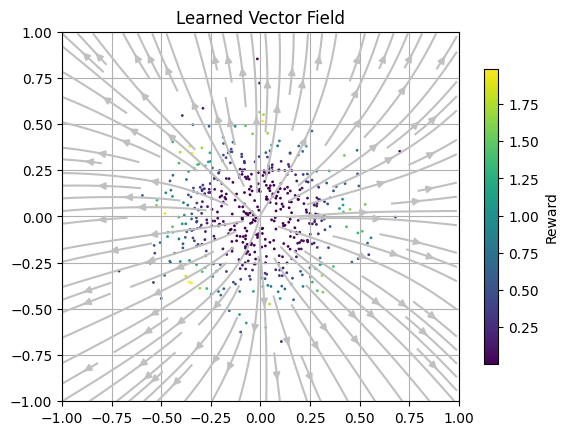}
        \caption{PFM}
        \label{fig:8_gaussians_g}
    \end{subfigure}
    \hfill
    \begin{subfigure}[b]{0.24\textwidth}
        \includegraphics[width=\linewidth]{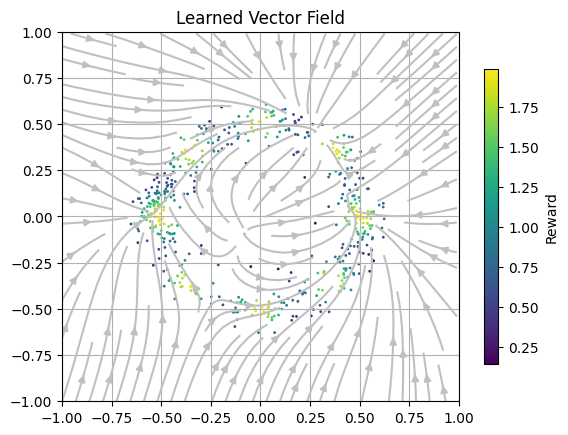}
        \caption{PFM (5 iter.)}
        \label{fig:8_gaussians_h}
    \end{subfigure}
\caption{Comparison of RLHF, DPO, and PFM on a 2-dimensional toy experiment. We generate preference labels from a ground truth reward in (a) and a pre-trained Gaussian reference policy (c). Both the RLHF (e) and DPO (f) methods struggle to align with the preferences, due to the overfitted reward model (b), even with the presence of KL regularizer ($\beta = 1$). PFM is able to mimic the distribution of the positively-labeled samples (d), and therefore achieves the highest performance (g). Repeating PFM iteratively to the marginal samples can further improve the alignment with the preference (h).}
\vspace{-10pt}
\end{figure}
% \footnotetext{Note that we do not flow through the vector field indefinitely; we flow until $t=1$, not $t \to \infty$.}

\subsection{Improving Alignment with Iterative Flow Matching}\label{sec:iterative}

As done in iterative variants of DPO~\citep{xiong2023iterative, yuan2024self}, we can also further improve the quality of alignment with iterative flow matching. 
% Specifically, the iterative DPO variants generate new samples with the fine-tuned model $\pi_{\theta}$ and collect preference labels within these newly generated samples, $y^{-}, y^{+} \sim \pi_{\theta}(\cdot | x)$. Then, one can again apply the DPO algorithm to this new preference dataset to improve the alignment quality. We repeat this process iteratively, by replacing the reference policy $\pi_{\mathrm{ref}}$ with the fine-tuned model $\pi_{\theta}$ in the latest iteration.
% The iterative flow matching framework adopts the same idea as in the iterative DPO algorithms. 
Specifically, upon obtaining a marginal distribution $p_{1}$ by applying flow matching, we again collect a new preference data $y^{-}, y^{+}$ from the obtained marginal distribution $p_{1}$ in (\ref{eq:target_distribution}). 
We repeat this process iteratively, by replacing the source distribution (which is $\pi_{\mathrm{ref}}$ in the first step) with the marginal distribution $p_{1}$ obtained in the latest iteration. This iterative process can be summarized as follows.
\begin{align}
p_{0}^{(n)}(y^{-} | x) & \propto p_{1}^{(n-1)}(y^{-} | x) \int p_{1}^{(n-1)}(y | x)\mathbb{P}(y > y^{-} | x)dy\\
p_{1}^{(n)}(y^{+} | x) & \propto p_{1}^{(n-1)}(y^{+} | x) \int p_{1}^{(n-1)}(y | x)\mathbb{P}(y^{+} > y | x)dy, \quad p_{1}^{(0)} = \pi_{\mathrm{ref}},\label{eq:iterative_flow_matching}
\end{align}
where we denote $p_{0}^{(n)}$ and $p_{1}^{(n)}$ to be the source and target distribution of the flow matching model at the $n$-th iteration, respectively. By repeatedly marginalizing the distribution with respect to the preference $\mathbb{P}(y^{+} > y^{-} | x)$, we can effectively "narrow" the sampling distribution towards the outputs with higher preference probability. See~\cref{fig:8_gaussians_h} for the results of the iterative method in our toy experiment. 
Note that even during this iterative approach, we leave the parameters of the pre-trained model $\pi_{\mathrm{ref}}$ untouched, and only require sampling from this model throughout the whole process.
Later in~\cref{sec:theory}, we formally prove that the iterative method allows us to obtain a distribution class that maximizes the ground truth expected preference, and hence yields an optimal policy $\pi^{*}$ in (\ref{eq:rlhf_objective}) with $\beta = 0$. See~\cref{thm:iterative}.

%%%%%%%%%%%%%%%%%%%%%%%%%%%%%%%%%%%%%%%
% Theoretical Analysis
%%%%%%%%%%%%%%%%%%%%%%%%%%%%%%%%%%%%%%%
\section{Theoretical Analysis of Preference Flow}\label{sec:theory}

In this section, we theoretically analyze why PFM framework can effectively learn to align with the preference.
Interestingly, learning to generate samples from the marginal distribution $p_{1}$ in (\ref{eq:target_distribution}) optimizes an objective that is similar to the goal of general RLHF in (\ref{eq:rlhf_objective}). 
Following the formulation provided by~\citet{azar2023general},
% by simply replacing the reward term $r^{*}$ by an equivalent Bradley-Terry model, 
one can observe that the objective (\ref{eq:rlhf_objective}) is equivalent to the below form:
\begin{equation}\label{eq:dpo_objective}
\pi^{*} = \underset{\pi}{\mathrm{argmax}}\ \mathbb{E}_{y \sim \pi} \bigg(\mathbb{E}_{y' \sim \pi_{\mathrm{ref}}}\Big(\sigma^{-1}(\mathbb{P}(y > y')) \Big) \bigg) - \beta\mathbb{D}_{\mathrm{KL}}\bigg(\pi \Big\| \pi_{\mathrm{ref}} \bigg)
\end{equation}
% \begin{equation}
% \pi^{*}(y^{+} | x) \propto \pi_{\mathrm{ref}}(y^{+} | x)\cdot \exp\!\bigg(\frac{1}{\beta} \mathbb{E}_{y \sim \pi_{\mathrm{ref}}\!(\cdot | x)}\bigg(\!\Psi\Big(\mathbb{P}(y > y^{-} | x)\Big)\! \bigg)\!\bigg)
% \end{equation}
where $\sigma^{-1}(\xi) = \log(\xi / (1 - \xi))$ is the logit function, and we drop the conditional dependence of $x$ for simplicity. Note DPO also optimizes for the same objective as in (\ref{eq:dpo_objective}).

Let us take a step back, and characterize the failure modes of the RLHF and DPO frameworks, by figuring out when these methods overfit the reward. Consider a case where the preferences are \emph{deterministic}, \textit{i.e.}, $\mathbb{P}(y > y') = 1$, so that $y$ is always preferred to $y'$. If we plug it into (\ref{eq:dpo_objective}), we see that $\sigma^{-1}(\mathbb{P}(y > y')) \to +\infty$. Therefore, the solution $\pi^{*}$ of (\ref{eq:dpo_objective}) ends up overfitting for the preference likelihood, resulting in a weak or even null KL regularization, regardless of the size of $\beta$.

Although in the case where the preference is not deterministic, this phenomenon can still be pronounced in the finite data regime~\citep{azar2023general}. Even if the true preference is strictly less than 1, we may have access to only a few data samples to estimate $\mathbb{P}(y > y') \approx 1$. This means that overfitting can be a critical issue in general, especially if the action space $Y$ or the context space $X$ is extremely large, as in the case of aligning large language models to human preferences.

In contrast, the PFM framework learns to generate a marginal distribution $p_{1}$. One can show that this marginal is a solution for the optimization problem that is similar to the objective (\ref{eq:dpo_objective}). 
% Below, we provide the full statement as a theorem.
\begin{theorem}[\textbf{Characterization of the Marginal}]\label{thm:marginal}
% Let $\mathcal{D} = \{(x, y^{+}, y^{-})\}$ be a preference dataset collected from a pre-trained reference policy $\pi_{\mathrm{ref}}$, where the preference $y^{+} > y^{-}$ is labeled with probability $\mathbb{P}(y^{+} > y^{-} | x)$. 
Let $p_{1}$ denote the marginal distribution of the positively-labeled samples $y^{+}$. Then the marginal distribution $p_{1}$ obtained from the preference model $\mathbb{P}(y > y' | x)$ is an optimizer of the optimization problem
\begin{equation}\label{eq:marginal_objective}
p_{1} = \underset{\pi}{\mathrm{argmax}}\ \mathbb{E}_{y \sim \pi} \bigg(\log \mathbb{E}_{y' \sim \pi_{\mathrm{ref}}}\Big(\mathbb{P}(y > y') \Big) \bigg) - \mathbb{D}_{\mathrm{KL}}\bigg(\pi \Big\| \pi_{\mathrm{ref}} \bigg).
\end{equation}
\end{theorem}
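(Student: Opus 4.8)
The plan is to exploit the explicit closed form of the marginal $p_{1}$ already derived in (\ref{eq:target_solution_form}) and then recognize the objective in (\ref{eq:marginal_objective}) as a Gibbs-type variational problem whose maximizer is exactly that closed form. Writing $g(y) \triangleq \mathbb{E}_{y' \sim \pi_{\mathrm{ref}}}\big(\mathbb{P}(y > y')\big)$ for the expected win probability of $y$ against a reference draw, equation (\ref{eq:target_solution_form}) says $p_{1}(y) = Z^{-1}\,\pi_{\mathrm{ref}}(y)\, g(y)$ with normalizer $Z = \int \pi_{\mathrm{ref}}(y)\, g(y)\, dy$. The key observation is that $\log g$ plays the role of a reward with inverse temperature $\beta = 1$, so I expect $p_{1}$ to coincide with the tilted distribution $\pi_{\mathrm{ref}}\exp(\log g)/Z$ that is classically optimal for a KL-regularized objective.

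First I would unfold the objective $J(\pi)$ into a single integral and substitute the closed form of $p_{1}$:
\[
J(\pi) = \int \pi(y)\Bigl(\log g(y) - \log \tfrac{\pi(y)}{\pi_{\mathrm{ref}}(y)}\Bigr)\, dy = \int \pi(y)\, \log \frac{g(y)\,\pi_{\mathrm{ref}}(y)}{\pi(y)}\, dy.
\]
Replacing $g(y)\,\pi_{\mathrm{ref}}(y) = Z\, p_{1}(y)$ turns the integrand into $\log Z + \log\big(p_{1}(y)/\pi(y)\big)$ and yields
\[
J(\pi) = \log Z - \mathbb{D}_{\mathrm{KL}}(\pi \,\|\, p_{1}).
\]
Since $\log Z$ is a constant independent of $\pi$, maximizing $J$ is equivalent to minimizing $\mathbb{D}_{\mathrm{KL}}(\pi \,\|\, p_{1})$; by Gibbs' inequality this divergence is nonnegative and vanishes if and only if $\pi = p_{1}$ almost everywhere, which establishes that $p_{1}$ is the (unique) maximizer.

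The variational step is routine, so the main obstacle is the bookkeeping around well-definedness rather than the optimization itself. I would need to check that $Z$ is finite (immediate, since $g \leq 1$ gives $Z \leq 1$) and strictly positive (so that $p_{1}$ is a genuine density and $\log Z$ is finite), and that the KL quantities are meaningful — in particular that any admissible $\pi$ is absolutely continuous with respect to $\pi_{\mathrm{ref}}$, which is already forced by finiteness of $\mathbb{D}_{\mathrm{KL}}(\pi \,\|\, \pi_{\mathrm{ref}})$ in the objective, and hence with respect to $p_{1}$ on the region $\{g > 0\}$. A secondary point worth flagging is the contrast with (\ref{eq:dpo_objective}): the effective reward here is $\log \mathbb{E}_{y'}\big(\mathbb{P}(y > y')\big)$, a logarithm of an average, rather than the average of logits $\mathbb{E}_{y'}\big(\sigma^{-1}(\mathbb{P}(y > y'))\big)$. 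This ordering of $\log$ and expectation is precisely what keeps the effective reward bounded as $\mathbb{P} \to 1$, and I would highlight it as the structural source of PFM's robustness to the reward overfitting discussed above.
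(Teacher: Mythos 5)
Your proposal is correct and follows essentially the same route as the paper's own proof: rewrite the objective as $\log Z - \mathbb{D}_{\mathrm{KL}}(\pi \,\|\, p_{1})$ by absorbing $\pi_{\mathrm{ref}}(y)\,\mathbb{E}_{y'\sim\pi_{\mathrm{ref}}}(\mathbb{P}(y>y'))$ into the normalized marginal $p_{1}$, then invoke Gibbs' inequality to conclude $\pi = p_{1}$ is the maximizer. Your added checks on finiteness and positivity of $Z$ and on absolute continuity are a small rigor bonus the paper omits, but the argument is the same.
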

We defer the proof to the Appendix~\ref{app:theory}. Similar to the RLHF and DPO objective (\ref{eq:dpo_objective}), the solution $p_{1}$ of (\ref{eq:marginal_objective}) drives the original distribution $\pi_{\mathrm{ref}}$ towards the points where the preference probability $\mathbb{P}(y > y')$ is increasing. However, unlike the RLHF or DPO objectives, the objective (\ref{eq:marginal_objective}) is bounded even in the deterministic case $\mathbb{P}(y > y') = 1$, making it robust to reward overfitting. 
% It is worth noting that in the objective (\ref{eq:marginal_objective}), the logarithm is outside of the expectation operator, providing additional robustness to the reward overfitting.

% It is worth noting that unlike~\citet{azar2023general} mentioned, the case of p = 0 is not important. If y > y' deterministically for all samples y, we do not require sampling from it.

Interestingly, maximizing the objective (\ref{eq:marginal_objective}) is equivalent to minimizing the KL distance between the policy $\pi$ and the normalized preference score with a cross-entropy constraint:
\begin{equation}
p_{1} = \underset{\pi}{\mathrm{argmin}}\ \mathbb{D}_{\mathrm{KL}}(\pi \| \widetilde{\mathbb{P}}) - \mathbb{E}_{y \sim \pi}(\log \pi_{\mathrm{ref}}(y)) \quad \text{where}\quad \widetilde{\mathbb{P}}(y) \propto \mathbb{E}_{y' \sim \pi_{\mathrm{ref}}}(\mathbb{P}(y > y')).
\end{equation}
Hence, the objective pushes the policy $\pi$ to match the preference $\widetilde{\mathbb{P}}$, while encouraging  $\pi$ to align with the high-probability samples in $\pi_{\mathrm{ref}}$. Since the constraint restricts the policy to the high-probability regions of $\pi_{\mathrm{ref}}$ where the preference labels are collected from, our method is less prone to reward overfitting in the out-of-distribution samples due to the distribution shift in $\pi$ from $\pi_{\mathrm{ref}}$~\citep{gao2023scaling,rafailov2024scaling}. Though we find this information-theoretic formulation interesting, we leave further analysis to future work.

Despite its robustness, one may notice that the objective (\ref{eq:marginal_objective}) is less flexible compared to the original objective, due to the fixed regularization constant with $\beta = 1$. Below, we show that if we apply the iterative algorithm provided in~\cref{sec:iterative}, one can further reduce the KL regularization strength and obtain an optimal policy $\pi^{*}$ in (\ref{eq:dpo_objective}) with $\beta \to 0$.

\begin{theorem}[\textbf{Convergence of Iterative Method}]\label{thm:iterative}
Assume $\pi_{\mathrm{ref}} \in L^{2}$ and $\mathbb{P}(y > y') \in L^{2}$. Consider an iterative update of the marginal distribution $p_{1}$ in (\ref{eq:iterative_flow_matching}). Then, the iteration converges to the uniform distribution of points $y$ where the value $\mathbb{E}_{y^{-} \sim \pi_{\mathrm{ref}}}\left( \mathbb{P}(y > y^{-})\right)$ is the largest, i.e.,
\begin{equation}
p_{1}^{(\infty)} \to U\left( \left\{y: y \in \underset{y}{\mathrm{argmax}}\ \mathbb{E}_{y^{-} \sim \pi_{\mathrm{ref}}}\left( \mathbb{P}(y > y^{-})\right) \right\}\right),
\end{equation}
where $U$ stands for uniform distribution, and we drop the conditional dependence of $x$ for simplicity.
\end{theorem}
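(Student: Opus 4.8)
The plan is to recognize the update (\ref{eq:iterative_flow_matching}) as a discrete-time, frequency-dependent replicator (equivalently, multiplicative-weights) dynamics and to analyze its concentration behavior. Writing the per-iteration ``win-rate'' score $s_{n}(y) := \mathbb{E}_{y' \sim p_{1}^{(n)}}(\mathbb{P}(y > y'))$, the recursion reads $p_{1}^{(n+1)}(y) \propto p_{1}^{(n)}(y)\, s_{n}(y)$, so that each step reweights the current population by its own fitness and renormalizes. The $L^{2}$ assumptions on $\pi_{\mathrm{ref}}$ and on the kernel $\mathbb{P}(\cdot > \cdot)$ guarantee that every $s_{n}$ is a well-defined bounded function and that the normalizing constants stay finite and positive, so the iteration is well-posed. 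The goal is to show the mass escapes every suboptimal region and concentrates on the maximizers of the reference win-rate $F(y) := \mathbb{E}_{y' \sim \pi_{\mathrm{ref}}}(\mathbb{P}(y > y'))$.

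The first key step is an argmax-invariance lemma explaining why the limit is phrased through $\pi_{\mathrm{ref}}$ rather than through the evolving population. Because the preferences are induced by the ground-truth reward, so that $\mathbb{P}(y > y')$ is monotone in $r^{*}(y)$ for every fixed $y'$, the ordering $r^{*}(y_{1}) > r^{*}(y_{2})$ implies $\mathbb{P}(y_{1} > y') > \mathbb{P}(y_{2} > y')$ pointwise, hence $s_{n}(y_{1}) > s_{n}(y_{2})$ for every $n$ and, in particular, $\mathrm{argmax}_{y} s_{n}(y) = \mathrm{argmax}_{y} F(y) =: Y^{*}$ independently of the iteration index. Thus the set $Y^{*}$ toward which the dynamics must concentrate can be read off from $\pi_{\mathrm{ref}}$ alone.

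Next I would prove geometric escape from suboptimal points. Fix $y^{*} \in Y^{*}$ and a point $y$ with $r^{*}(y) < r^{*}(y^{*})$, and track the log-odds $L_{n}(y) := \log\big(p_{1}^{(n)}(y)/p_{1}^{(n)}(y^{*})\big)$, which obeys $L_{n+1}(y) = L_{n}(y) + \log\big(s_{n}(y)/s_{n}(y^{*})\big)$. By the pointwise monotonicity above the increment is strictly negative, and a short bootstrapping argument --- using that once a positive fraction of mass lies near $Y^{*}$ the typical competitor $y'$ has near-maximal reward, which forces $s_{n}(y^{*})/s_{n}(y)$ bounded away from $1$ --- gives a uniform negative upper bound on the increment. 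Hence $L_{n}(y) \to -\infty$ at a geometric rate, and by a covering/dominated-convergence argument over the suboptimal region the relative mass on the complement of any neighborhood of $Y^{*}$ vanishes, establishing $p_{1}^{(n)} \to$ a distribution supported on $Y^{*}$.

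The remaining step is to identify the limiting distribution on $Y^{*}$ as uniform, and this is where I expect the main difficulty. For two maximizers $y_{1}, y_{2} \in Y^{*}$ the reward-induced structure gives $s_{n}(y_{1}) = s_{n}(y_{2})$ exactly (equal reward forces equal pointwise win-rates against every $y'$), so the ratio $p_{1}^{(n)}(y_{1})/p_{1}^{(n)}(y_{2})$ is frozen at its initial value $\pi_{\mathrm{ref}}(y_{1})/\pi_{\mathrm{ref}}(y_{2})$ for all $n$; selection on $Y^{*}$ is neutral. Consequently the limit is $\pi_{\mathrm{ref}}$ restricted and renormalized to $Y^{*}$, which coincides with $U(Y^{*})$ exactly in the generic case of a unique maximizer (a Dirac mass) or when $\pi_{\mathrm{ref}}$ is already uniform on the plateau $Y^{*}$. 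The hard and slightly delicate part is therefore reconciling the neutral within-plateau dynamics with the stated uniform limit, together with making the geometric-escape bound genuinely uniform despite the frequency dependence of the fitness $s_{n}$.
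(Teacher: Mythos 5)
Your proposal takes a genuinely different route from the paper. The paper's proof is operator-theoretic: it defines the update map $\mathcal{T}[p](y) = \frac{1}{Z_{p}}\, p(y)\int p(y')\,\mathbb{P}(y>y')\,dy'$, invokes Schur's test and Hilbert--Schmidt properties to claim $\mathcal{T}$ is compact, extracts a weakly convergent subsequence of the iterates, asserts that its limit $p^{*}$ is a fixed point, and then concludes that a fixed point must concentrate on the argmax set and be uniform there. Your replicator/multiplicative-weights analysis instead unrolls the recursion as $p_{1}^{(n)} \propto \pi_{\mathrm{ref}}\prod_{k} s_{k}$ and tracks log-odds against a maximizer, which buys two things the paper's argument lacks: an explicit geometric escape rate from suboptimal regions, and an honest accounting of which structural assumption drives the result. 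Your argmax-invariance lemma --- that $\mathrm{argmax}_{y}\, s_{n}(y)$ is the same for every $n$ because the preference is monotone in the ground-truth reward --- is exactly the step the paper slips in without justification when it writes ``Since $\mathbb{P}(y > y')$ is maximized uniformly over $y$ when $\mathbb{E}_{y' \sim \mu}(\mathbb{P}(y > y'))$ is maximized''; without a reward-induced (e.g., Bradley--Terry-like) preference this invariance can fail (non-transitive, rock-paper-scissors preferences make the fitness landscape drift with the population, and the iteration need not settle on the $\pi_{\mathrm{ref}}$-argmax at all), so making it explicit is an improvement, even though it sits awkwardly with the paper's advertised freedom from reward-model assumptions.

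The ``difficulty'' you flag in your last step is not a defect of your proof; it is a gap in the paper's. Your within-plateau neutrality observation (equal reward forces $s_{n}(y_{1}) = s_{n}(y_{2})$ for all $n$, so ratios on $Y^{*}$ stay frozen at their $\pi_{\mathrm{ref}}$ values) is correct, and it shows the limit is $\pi_{\mathrm{ref}}$ conditioned on $Y^{*}$, which coincides with $U(Y^{*})$ only when $Y^{*}$ is a singleton or $\pi_{\mathrm{ref}}$ is constant on it. The paper's fixed-point argument cannot resolve this either: the fixed-point condition only forces $f_{p^{*}}$ to be constant on $\mathrm{supp}(p^{*})$, a property satisfied by $\pi_{\mathrm{ref}}$ restricted to $Y^{*}$ (and by every Dirac mass) just as well as by $U(Y^{*})$; the uniformity in the paper is asserted, not derived. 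So your route, carried to completion, proves a corrected version of the statement. The one place where your sketch is at a comparable level of rigor to the paper is the uniform lower bound on the log-odds decrement (your ``bootstrapping''); note that under a Bradley--Terry model with bounded rewards this bound follows directly, since $s_{n}(y^{*}) - s_{n}(y) \geq c\,(r^{*}(y^{*}) - r^{*}(y))$ uniformly in $n$ by a lower bound on $\sigma'$ over the relevant compact range, so no mass-concentration bootstrap is actually needed --- whereas the paper's compactness step applies linear integral-operator theory to the nonlinear map $\mathcal{T}$ and never justifies that a subsequential limit of the iterates is a fixed point, a hole that has no comparably easy patch.
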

We defer the proof to the Appendix~\ref{app:theory}. Intuitively, the proof follows from the fact that the marginalization iteratively "narrows" down the distribution towards the outputs with higher preference. We note here that the $L^{2}$ assumptions are generally valid in practical domains. See Appendix~\ref{app:theory}.
\section{Experimental Results}
\label{sec:exp}
%%%%%%%%%%제 뇌피셜로 적었습니다. 2가지 질문 더 추가해서%%%%%%%%%%%%%%%%%%%
% In this section, we conduct experiments to explore several key questions: First, is learning flow more beneficial than fine-tuning the pre-trained policy? Second, can \texttt{PREFLOW} be effectively adopted across various tasks? Third, how can we optimize \texttt{PREFLOW} to fully meet the preference requirements?
%%%%%%%%%%%%%%%%%%%%%%%%%%%%%%%%%%%%%%%%%%%%%%%%%%%%%%%%%%%%%%%%

In this section, we conduct experiments to address the following questions: \emph{Q1. Can PFM align generated samples from the black-box model with preference and achieve comparable results in practical tasks?} \emph{Q2. Is PFM more beneficial than methods optimizing for explicit/implicit reward model?} and \emph{Q3. Is PFM more beneficial than naïve add-on methods, \textit{e.g.}, separately training generative models to imitate preferred samples?} To answer these questions, we validate our method in three domains: Conditional text and image generation, and offline reinforcement learning tasks.

% First, is learning flow more beneficial than fine-tuning the pre-trained policy? Second, is \texttt{PREFLOW}

\subsection{Conditional Image Generation}\label{sec:mnist}
\begin{figure}
    \centering
    % First row of images
    \begin{subfigure}[b]{0.24\textwidth}
        \includegraphics[width=\linewidth]{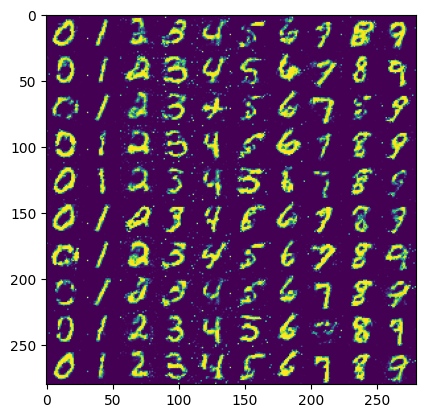}
        \caption{Pretrained (0.8389)}
        \label{fig:mnist_a}
    \end{subfigure}
    \hfill
    \begin{subfigure}[b]{0.24\textwidth}
        \includegraphics[width=\linewidth]{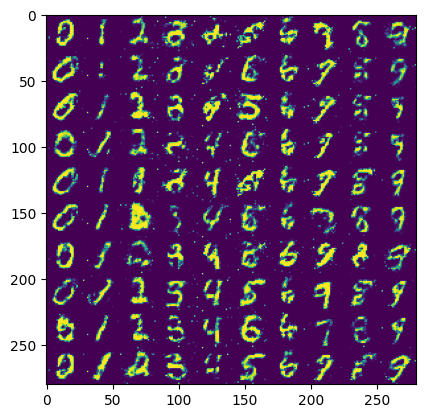}
        \caption{Rejected $y^{-}$ (0.3951)}
        \label{fig:mnist_b}
    \end{subfigure}
    \hfill
    \begin{subfigure}[b]{0.24\textwidth}
        \includegraphics[width=\linewidth]{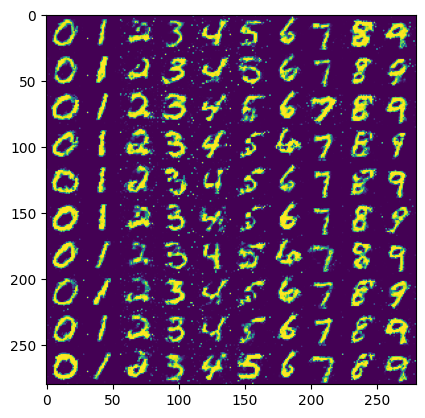}
        \caption{Preferred $y^{+}$ (0.9976)}
        \label{fig:mnist_c}
    \end{subfigure}
    \hfill
    \begin{subfigure}[b]{0.24\textwidth}
        \includegraphics[width=\linewidth]{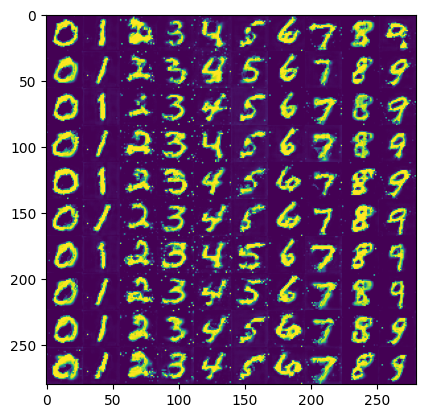}
        \caption{PFM (0.9841)}
        \label{fig:mnist_d}
    \end{subfigure}

    % Second row of images
    \begin{subfigure}[b]{0.24\textwidth}
        \includegraphics[width=\linewidth]{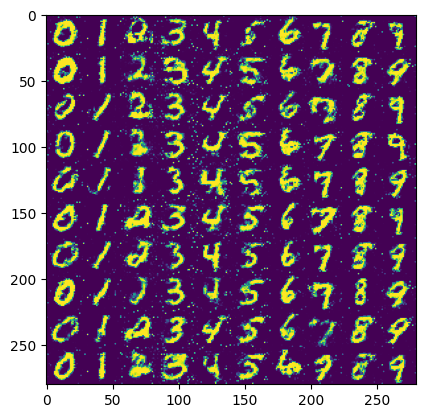}
        \caption{RLHF (0.9171)}
        \label{fig:mnist_e}
    \end{subfigure}
    \hfill
    \begin{subfigure}[b]{0.24\textwidth}
        \includegraphics[width=\linewidth]{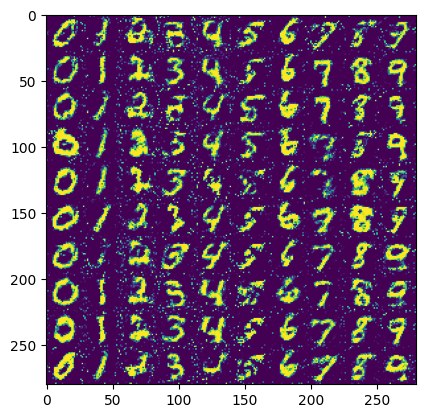}
        \caption{DPO (0.8936)}
        \label{fig:mnist_f}
    \end{subfigure}
    \hfill
    \begin{subfigure}[b]{0.24\textwidth}
        \includegraphics[width=\linewidth]{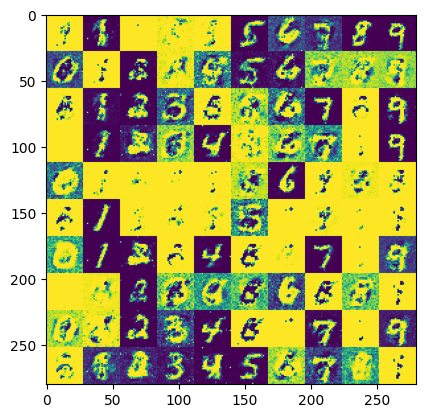}
        \caption{DPO $\beta <\!\!< 1$ (0.5397)}
        \label{fig:mnist_g}
    \end{subfigure}
    \hfill
    \begin{subfigure}[b]{0.24\textwidth}
        \includegraphics[width=\linewidth]{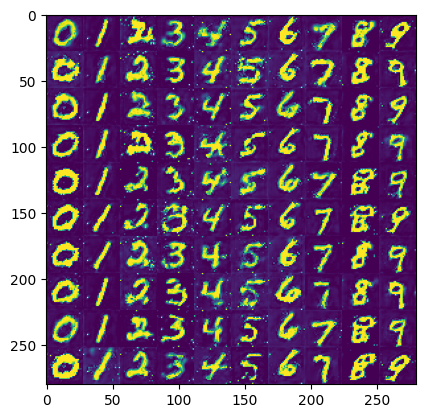}
        \caption{Iterative PFM (\textbf{0.9996})}
        \label{fig:mnist_h}
    \end{subfigure}
\caption{Comparison of RLHF, DPO, and PFM on a conditional MNIST image generation task. Numbers represent the preference score. PFM (d) demonstrates superior sample quality and preference alignment compared to RLHF (e) and DPO (f), where DPO collapses with a small size of $\beta$ (g). The iterative PFM with only two iterations (h) results in almost perfectly aligning with the preferences.}
\label{fig:mnist}
\vspace{-10pt}
\end{figure}

%setup
% In this study, we conduct a detailed evaluation of \texttt{PREFLOW} by applying it to a conditional image generation task with the MNIST dataset. We utilize a pre-trained DCGAN \citep{radford2015unsupervised} as our reference policy $\pi_{\mathrm{ref}}$. The generator of the DCGAN, conditioned on digit labels, produces image pairs which serve as our initial data pool. From the DCGAN's output, we systematically collect pairs of images corresponding to each digit label in the MNIST dataset. This approach ensures that each pair used in the experiments is directly comparable, sharing the same conditional label but differing in visual representation due to the stochastic nature of the generative process. To construct preference datasets, we evaluate each image pair using a pre-trained LeNet classifier \citep{lecun1998gradient}. For each image pair, we compute the softmax probabilities of the correct digit label predictions, and these probabilities are then used as rewards to assign preferences.

%baselines
%For comparative analysis, we include two baseline methodologies: fine-tuned RLHF and DPO. These baselines were chosen to highlight \texttt{PREFLOW}'s performance against standard approaches in RL that incorporate human-like feedback into training. RLHF involves further tuning of policies based on feedback directly linked to the generated images, while DPO optimizes policies through a more direct adaptation to the reward signals.

%Setup
We first evaluate PFM on a conditional image generation task using the MNIST dataset~\citep{lecun1998gradient}. Specifically, we utilize a pre-trained DCGAN~\citep{radford2015unsupervised} generator as $\pi_{\mathrm{ref}}$ and collect sample pairs from $\pi_{\mathrm{ref}}(\cdot | x)$ conditioned on the digit labels $x \in \{0, \cdots, 9\}$. To construct preference datasets, we assign preferences to sample pairs according to the softmax probabilities of the labels from a LeNet~\citep{lecun1998gradient}. Then, we learn a PFM flow $v_{\theta}$ that transports $y^{-}$ to $y^{+}$ given a condition $x$. 
% We consider the RLHF and DPO fine-tuned models as baseline methods. 
More experimental details are provided in the Appendix \ref{app:exp}.

% Results
Figure \ref{fig:mnist_a} illustrates the generated samples from $\pi_{\mathrm{ref}}$, and the rejected and preferred images are depicted in Figure \ref{fig:mnist_b} and Figure \ref{fig:mnist_c}, respectively, where the values in parenthesis are the measured preference score. As shown in Figure \ref{fig:mnist_d}, PFM achieves higher preference alignment and better sample quality than RLHF (Figure \ref{fig:mnist_e}) and DPO (Figure \ref{fig:mnist_f}) without fine-tuning $\pi_{\mathrm{ref}}$. 
Moreover, PFM achieves nearly perfect alignment with the preferences after only two iterations (Figure \ref{fig:mnist_h}), demonstrating the effectiveness of iterative PFM.

\subsection{Conditional Text Generation}

Next, we adopt a controlled (positive) sentiment review generation task.
As done in~\citet{rafailov2024direct}, to perform a controlled evaluation, we adopt the pre-trained sentiment classifier as the preference annotator. 
% The preference dataset is constructed from randomly selected pairs of moview reviews $y^{+}, y^{-}$ from the IMDB dataset~\citep{maas2011learning}, where the preference is obtained from the classifier logit probability as $p(\mathrm{positive} | y^{+}) > p(\mathrm{positive} | y^{-})$. 
We train a preference flow on randomly selected pairs of movie reviews $y^{+}, y^{-}$ from the IMDB dataset~\citep{maas2011learning}. 
For our PFM framework to be applied to variable-length inputs, we employ a T5-based autoencoder to work with fixed-sized embeddings.
We adopt GPT-2 SFT model on the IMDB dataset as a reference model $\pi_{\mathrm{ref}}$. 
% where the learned PFM module is added on top of this $\pi_{\mathrm{ref}}$, to obtain the marginal distribution of the preferred (positive sentiment) review $p_{1}(y^{+})$.
We also compare our method with a RLHF (PPO) fine-tuned policy $\pi_{\mathrm{PPO}}$. See Appendix~\ref{app:exp} for detailed experimental settings. 

% \begin{table}[t]
% \centering
% \vspace{-10pt}
% \caption{Average preference scores evaluated in 100 test instances.}
% \label{tab:imdb_main}
% \vspace{2pt}
% \begin{tabular}{@{}lll@{}}
% \toprule
% Reference & PFM & Iterative PFM (5 iter) \\ \midrule
% -0.3607 & 0.6399 & \textbf{2.7469} \\ \midrule
% RLHF fine-tuned & RLHF + PFM & RLHF + Iterative PFM (5 iter) \\ \midrule
% \ 2.0156 & 2.178 & \textbf{2.7894} \\ \bottomrule
% \end{tabular}
% \end{table}

\begin{figure}[t]
    \centering
    % First row of images
    \vspace{-10pt}
    \begin{subfigure}[b]{0.4\textwidth}
        \includegraphics[width=\linewidth]{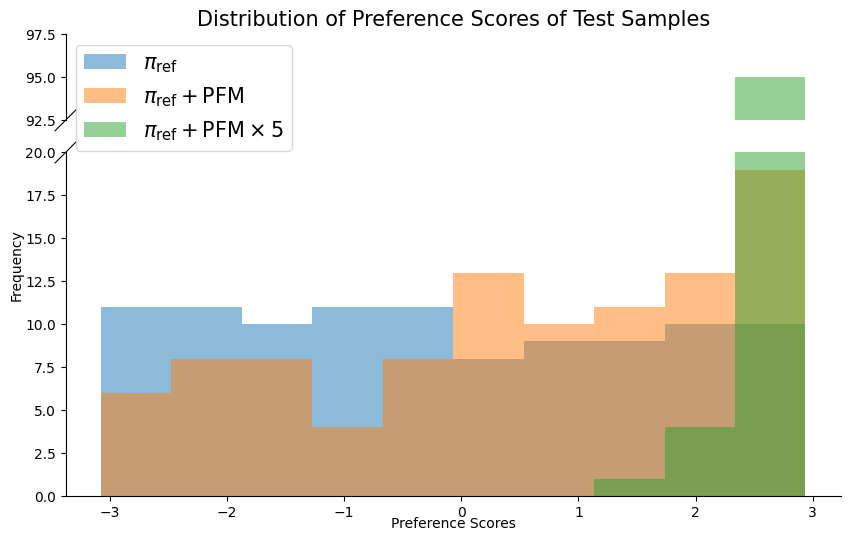}
        \caption{SFT vs PFM}
        \label{fig:sft_vs_pfm_main}
    \end{subfigure}
    \begin{subfigure}[b]{0.4\textwidth}
        \includegraphics[width=\linewidth]{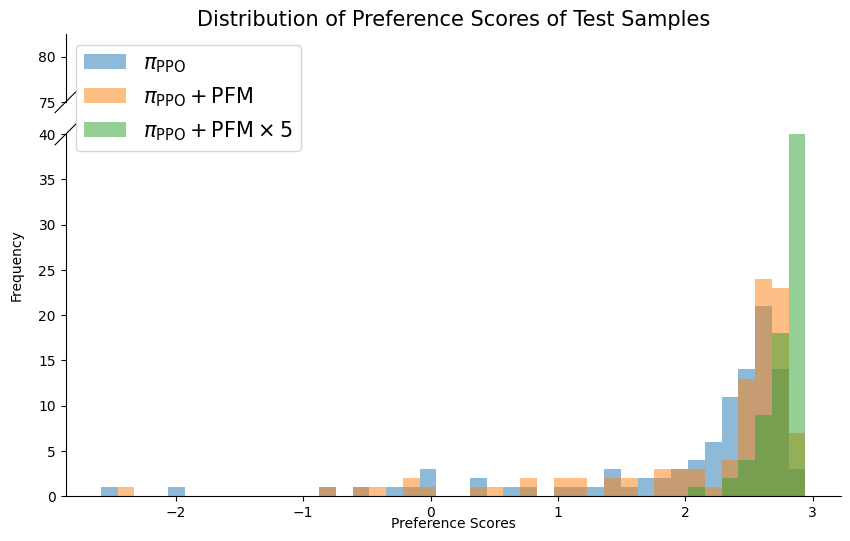}
        \caption{PPO vs PFM}
        \label{fig:ppo_vs_pfm_main}
    \end{subfigure}
\caption{
Distribution of preference scores for each method. Left visualizes the distribution of scores for the pre-trained reference policy and PFM-attached policy. Without fine-tuning the reference policy, PFM can obtain substantially better results by only adding a small flow-matching module. Right visualizes the preference score distribution of the RLHF (PPO) fine-tuned policy, and the PFM added policy to the PPO fine-tuned policy. 
Note that PFM is trained with the original dataset, not by the dataset generated from the PPO fine-tuned policy. 
% In other words, PFM can be trained on the base dataset generated from the SFT model, and still be attached to arbitrary fine-tuned policies to further improve its performance.
}
\label{fig:nlp_main}
\vspace{-10pt}
\end{figure}

\begin{table}[t]
\begin{minipage}{0.5\linewidth}
\centering
\vspace{-12pt}
\caption{Average preference scores of 100 test instances.}
\vspace{2pt}
\label{tab:imdb_main}
\begin{tabular}{@{}lll@{}}
\toprule
$\pi_{\mathrm{ref}}$ & $\pi_{\mathrm{ref}}$ + PFM & $\pi_{\mathrm{ref}}$ + PFM$\times$5 \\ \midrule
-0.3607 & 0.6399 & \textbf{2.7469} \\ \midrule
$\pi_{\mathrm{PPO}}$ & $\pi_{\mathrm{PPO}}$ + PFM & $\pi_{\mathrm{PPO}}$ + PFM$\times$5 \\ \midrule
2.0156 & 2.178 & \textbf{2.7894} \\ \bottomrule
\end{tabular}
\end{minipage}%
\hfill
\begin{minipage}{0.5\linewidth}
\centering
\caption{GPT-4 win rate over 100 test samples.}
\vspace{2pt}
\label{tab:merged}
\begin{tabular}{@{}lll@{}}
\toprule
Method & vs. $\pi_{\mathrm{ref}}$ & vs. $\pi_{\mathrm{PPO}}$ \\ \midrule
$\pi_{\mathrm{ref}}$ + PFM & \textbf{100\%} & 2\% \\
$\pi_{\mathrm{ref}}$ + PFM$\times$5 & -- & 85\% \\
$\pi_{\mathrm{PPO}}$ & \textbf{100\%} & -- \\
$\pi_{\mathrm{PPO}}$ + PFM & -- & 99\% \\
$\pi_{\mathrm{PPO}}$ + PFM$\times$5 & -- & \textbf{100\%} \\ \bottomrule
\end{tabular}
\end{minipage}
\end{table}

Below, we report the average preference score (from the classifier annotator) of 100 randomly generated review samples for each method. As shown in~\cref{tab:imdb_main}, PFM is able to improve the preference score of any baseline model to which it is attached. We emphasize here that our method requires relatively smaller training cost compared to the standard RLHF frameworks, even in the iterative settings. See~\cref{tab:param} in Appendix~\ref{app:exp} for the number of parameters that require training for each framework in tackling this task. PFM requires training a much smaller number of parameters (around 1.2\%) while still achieving better performance. We also note here that instead of iteratively training the PFM module as described in~\cref{sec:iterative}, simply applying the same learned preference flow iteratively to the improved samples achieves the best performance. (See Appendix~\ref{app:exp}.)

Interestingly, we observe that the distribution of the scores tends to shift more toward that of the preferred samples with an increasing number of PFM iterations. (See~\cref{fig:nlp_main}.) This result aligns with our theoretical insights: the PFM framework learns to shift the source distribution (i.e., the distribution of the reference policy) toward the marginal distribution of the more preferred samples.

We also compute the win rate with GPT-4 evaluation. The results are summarized in~\cref{tab:merged}. Both PFM and PPO fine-tuned policy excel the reference policy with win rates 100\%. (First column of~\cref{tab:merged}.) Furthermore, we observe that the iterative PFM with 5 iterations on the reference model outperforms the PPO fine-tuned policy. If PFM is added on top of the PPO fine-tuned policy, we observe near 100\% win rates for both PPO + PFM and PPO + Iterative PFM.

% \begin{table}[ht!]
% \centering
% \caption{GPT-4 win rates over 100 test samples.}
% \label{tab:merged}
% \begin{tabular}{@{}lll@{}}
% \toprule
% Method & vs. $\pi_{\mathrm{ref}}$ & vs. $\pi_{\mathrm{PPO}}$ \\ \midrule
% $\pi_{\mathrm{ref}}$ + PFM & \textbf{100\%} & 2\% \\
% $\pi_{\mathrm{ref}}$ + PFM$\times$5 & -- & 85\% \\
% $\pi_{\mathrm{PPO}}$ & \textbf{100\%} & -- \\
% $\pi_{\mathrm{PPO}}$ + PFM & -- & 99\% \\
% $\pi_{\mathrm{PPO}}$ + PFM$\times$5 & -- & \textbf{100\%} \\ \bottomrule
% \end{tabular}
% \end{table}

\subsection{Offline Reinforcement Learning}
% % Benchmark : D4RL
Finally, we employ the D4RL~\citep{fu2020d4rl} benchmark to assess the performance of PFM in reinforcement learning tasks. 
% By performing a wide array of RL tasks (along with the results in~\cref{sec:mnist}) we aim to answer \textit{Q1}. 
% The experiments involve using offline datasets that are classified into three various levels of quality: \texttt{random}, \texttt{medium}, and \texttt{expert}. 
Following the prior works on the PbRL literature, we adopt trajectory-based preference alignment~\citep{hejna2023contrastive, kim2023preference}.
We first randomly choose an starting state $s_{0} \sim S$, and sample two trajectories $\tau^{+}$ and $\tau^{-}$ with fixed length $\ell \geq 2$ from $\pi_{\mathrm{ref}}$:
\begin{align}
\tau^{+} &:= (a_{0}, a_{1}, \cdots, a_{\ell}) \sim \pi_{\mathrm{ref}}(\cdot | s_{0})\\
\tau^{-} &:= (a_{0}', a_{1}', \cdots, a_{\ell}')\sim \pi_{\mathrm{ref}}(\cdot | s_{0}).
\end{align}
Then, we obtain the preference $\tau^{+} > \tau^{-}$ given the starting state $s_{0}$ using a scripted teacher approach that has also been widely adopted in the PbRL settings~\citep{lee2021pebble, kim2023preference}, which prioritizes trajectories with higher rewards based on the ground truth reward. For inference at a given state $s_{t}$, we again sample an action trajectory $\tau = (a_{t}, \cdots, a_{t + \ell})$ from $\pi_{\mathrm{ref}}(\cdot | s_{t})$, and apply flow matching to obtain a better action sequence. 

The baseline methods for comparing the performance of PFM include behavior cloning (BC), which we adopt as our pre-trained reference model $\pi_{\mathrm{ref}}$, and a DPO fine-tuned model from the BC model. Additionally, we train a separate behavior cloning model to the collected preferred samples $y^{+} \sim p_{1}$, aiming to replicate the marginal distribution of the "good" trajectories. 
% All methods are trained until convergence, and we report the normalized episodes returns with the standard error from 5 different random seeds. 
Further experimental details are deferred to Appendix \ref{app:exp}.

\begin{table}[t!]
\centering
\caption{Normalized results on MuJoCo datasets. Mean and standard deviation from 5 seeds are reported.}
\label{tab:mujoco}
\begin{tabular}{@{}ll|ll|l@{}}
\toprule
Dataset & BC & DPO Fine-tuned & PFM (Ours) & Marginal BC \\ \midrule
ant-random-v2 & \ \ 31.59 \scriptsize $\pm$ 0.05 & \ \ 31.52 \scriptsize $\pm$ 0.08 & \textbf{\ \ 31.62 \scriptsize $\pm$ 0.13} & 25.01 \scriptsize $\pm$ 4.64 \\
ant-medium-v2 & \ \ 90.16  \scriptsize $\pm$ 21.48 & \ \ 95.04 \scriptsize $\pm$ 13.93 & \ \ 96.73 \scriptsize $\pm$ 2.47 & \textbf{99.67 \scriptsize $\pm$ 1.57} \\
ant-expert-v2 & 125.83 \scriptsize $\pm$ 24.07 & \textbf{134.96 \scriptsize $\pm$ 3.76} & 132.20 \scriptsize $\pm$ 2.69 & 99.29 \scriptsize $\pm$ 34.74 \\ \midrule

hopper-random-v2 & \quad 3.17 \scriptsize $\pm$ 0.25 & \quad 3.23 \scriptsize $\pm$ 0.25 & \quad \textbf{7.69 \scriptsize $\pm$ 0.08} & \ \ 5.48 \scriptsize $\pm$ 4.46 \\
hopper-medium-v2 & \ \ 52.83 \scriptsize $\pm$ 5.03 & \ \ 53.47 \scriptsize $\pm$ 3.92 & \ \ \textbf{58.76 \scriptsize $\pm$ 2.62} & 40.44 \scriptsize $\pm$ 1.69 \\
hopper-expert-v2 & 111.27 \scriptsize $\pm$ 0.48 & 111.51 \scriptsize $\pm$ 0.92 & \textbf{111.70 \scriptsize $\pm$ 0.77} & 32.39 \scriptsize $\pm$ 0.1 \\ \midrule

halfcheetah-random-v2 & \quad 2.25 \scriptsize $\pm$ 0.01 & \quad 2.26 \scriptsize $\pm$ 0.01 & \quad \textbf{2.26 \scriptsize $\pm$ 0.0} & \ \ 2.21 \scriptsize $\pm$ 0.02 \\
halfcheetah-medium-v2 & \ \ 40.97 \scriptsize $\pm$ 0.89 & \ \ 41.94 \scriptsize $\pm$ 0.68 & \ \ \textbf{43.49 \scriptsize $\pm$ 0.88} & 38.79 \scriptsize $\pm$ 1.27 \\
halfcheetah-expert-v2 & \ \ 91.02 \scriptsize $\pm$ 1.24 & \ \ \textbf{92.15 \scriptsize $\pm$ 0.76} & \ \ 90.05 \scriptsize $\pm$ 0.83 & \ \ 4.77 \scriptsize $\pm$ 2.5 \\ \midrule

walker2d-random-v2 & \quad 1.47 \scriptsize $\pm$ 0.1 & \quad 1.38 \scriptsize $\pm$ 0.08 & \quad 1.77 \scriptsize $\pm$ 0.13 & \ \ \textbf{2.45 \scriptsize $\pm$ 0.38} \\
walker2d-medium-v2 & \ \ 60.35 \scriptsize $\pm$ 18.16 & \ \ \textbf{74.05 \scriptsize $\pm$ 12.05} &\ \  72.59 \scriptsize $\pm$ 15.8 & 65.29 \scriptsize $\pm$ 12.58 \\
walker2d-expert-v2 & \textbf{108.62 \scriptsize $\pm$ 0.39} & 108.38 \scriptsize $\pm$ 0.28 & 108.36 \scriptsize $\pm$ 0.21 & \ \ 15.8 \scriptsize $\pm$ 0.54 \\ \midrule

Random Average & \quad 9.62 & \quad 9.60 & \ \ \textbf{10.84} & \ \ 8.79 \\
Medium Average & \ \ 61.08 & \ \ 66.13 & \ \ \textbf{67.89} & 61.05 \\
Expert Average & 109.19 & \textbf{111.75} & 110.58 & 38.06 \\ \midrule
D4RL Average & \ \ 59.97 & \ \ 62.49 & \ \ \textbf{63.10} & 35.97 \\ \bottomrule
\end{tabular}
\end{table}

\begin{figure}
\vspace{-10pt}
\centering
\includegraphics[width=\textwidth]{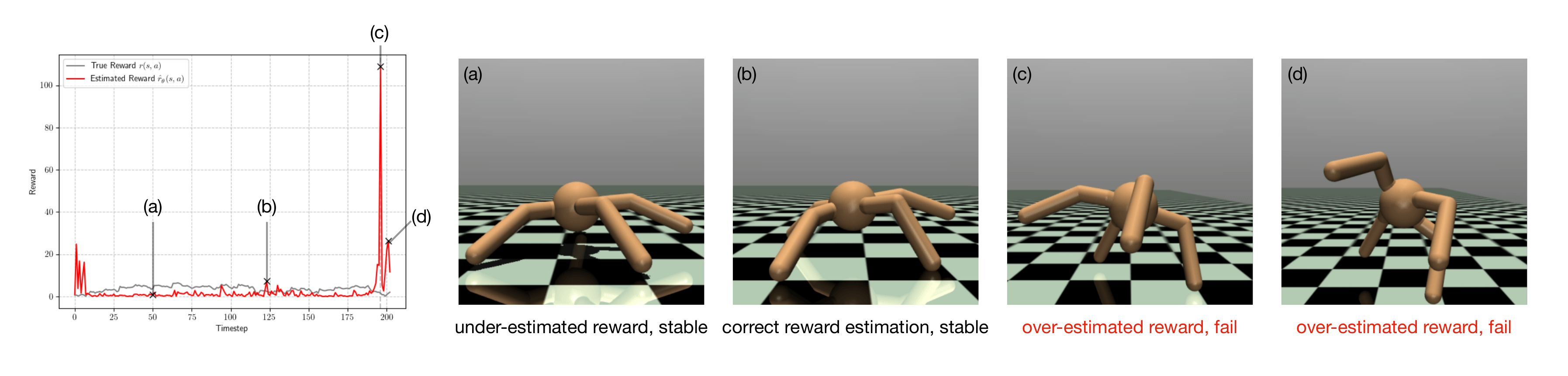}
\caption{Analysis of a sample episode of a DPO fine-tuned model on the MuJoCo ant environment. DPO fine-tuned model often overestimates the reward due to reward overfitting (\textit{e.g.}, $t=196$). This can cause the policy to choose problematic actions. Here, the implicit reward estimation is $\hat{r}_{\theta}(s, a) = \beta\log(\pi_{\theta}(a|s) / \pi_{\mathrm{ref}}(a|s))$.}\label{fig:dpo_failure}
\vspace{-10pt}
\end{figure}

Table \ref{tab:mujoco} presents the outcomes from evaluations conducted on 12 offline datasets. Our findings indicate that PFM consistently demonstrates comparable or even superior performance with lower variance across all baseline methods. Notably, our method excels particularly in datasets generated from suboptimal behavioral policies, achieving better performance. Furthermore, PFM manages to match the performance on expert datasets even in the absence of a reward model, underscoring its robustness and effectiveness. This demonstrates that PFM can effectively align black-box models with preference through flow matching, without the need to fine-tune the pre-trained model.

\subsection{Is Learning a Flow Truly Beneficial?}
In this remaining section, we focus on answering the remaining questions, \emph{Q2} and \emph{Q3}. We first investigate why PFM can be advantageous over previous methods with explicit/implicit reward modeling. As can be seen in~\cref{fig:dpo_failure}, DPO, like typical RLHF approaches, is also prone to reward overfitting, and may cause the agent to fail. This is because if the preference estimate is close to 0 or 1, these methods may end up overoptimizing to the exploding reward model~\citep{ziegler2019fine, gao2023scaling}. PFM, on the other hand, is inherently robust to such over-estimation, as we adopt a completely different optimization framework that does not require a reward proxy. (See~\cref{thm:marginal}.)

On the other hand, we observe less performance gain in the expert datasets. This is a possible failure mode of PFM, where the generated samples are already near-optimal. In such regime, an arbitrary source $y \sim \pi_{\mathrm{ref}}$ has a near zero probability of being sampled from the true marginal $p_{0}$, suggesting that PFM with prior as $\pi_{\mathrm{ref}}$ might suffer from a shifted source distribution. We verify this experimentally on walker2d, where PFM struggles the most. By adopting a true marginal $p_{0}$ as the source, PFM with prior $p_{0}$ can achieve the highest performance among all baselines. This performance is evident even on the expert dataset, matching our theoretical analysis. See Appendix~\ref{app:evidence}.

Next, we compare PFM to an alternative approach that simply tries to approximate the marginal distribution directly from the positive samples. Intuitively, training a generative model from scratch that replicates the marginal $p_{1}$ is as computationally costly as training the original reference model. Experimentally, we observe that PFM achieves better performance than training a behavior cloning model (Marginal BC) to replicate the distribution of the preferred samples~(\cref{tab:mujoco}). However, it is also worth mentioning that the Marginal BC model does occasionally yield the best results, suggesting the potential of using a marginal distribution for preference alignment.

\section{Related Works}

% In the domain of Reinforcement Learning from Human Feedback (RLHF), several innovative reward-free methods have emerged, each contributing distinct approaches to optimizing agent performance based on human preferences. 
Contrastive Preference Learning (CPL)~\citep{hejna2023contrastive} is a class of reward-free methods that utilizes contrastive learning techniques to align model outputs with the preferences observed in the dataset. By leveraging contrastive loss, CPL encourages the model to distinguish between preferred and less preferred outcomes effectively. Flow-to-Better (FTB)~\citep{zhang2023flow} innovatively uses a diffusion model to transition from less preferred data to more preferred data, similar to the flow-based approach in our work. However, FTB mainly focuses on data augmentation, where they used the trained diffusion model to generate more data samples for behavior cloning. Despite their strengths, both works rely on the Bradley-Terry model to implicitly learn the reward function.

Identity Preference Optimization (IPO)~\citep{azar2023general} builds upon the foundation laid by DPO, extending the framework to accommodate a broader range of preference models beyond the Bradley-Terry paradigm. In particular, they focus on finding an objective that is bounded even in a deterministic preference regime, by replacing the function $\sigma^{-1}$ in (\ref{eq:dpo_objective}) with an identity function $I(x) = x$. This effectively mitigates the reward overfitting problem observed in DPO and standard RLHF methods. 
% All of the above mentioned works require fine-tuneing the pre-trained models to better align with human feedback. This fine-tuning process, although effective, can be costly in terms of computational resources and time, which may be a significant drawback in practical applications.

Our method distinguishes itself from these approaches by not requiring the Bradley-Terry assumption nor the fine-tuning of pre-trained models. This eliminates the risk of reward overfitting associated with the Bradley-Terry model and reduces the computational cost significantly. By avoiding the need for fine-tuning, our method offers a more efficient and scalable solution for integrating human preferences into reinforcement learning systems. This makes our approach particularly suitable for scenarios where computational resources are limited or where quick adaptation to human feedback is essential. The comparison of these related works is summarized in \cref{tab:comparison}. 

\begin{table}[h!]
\centering
\vspace{-10pt}
\caption{Comparison of our method to other works.}\label{tab:comparison}
\begin{tabular}{@{}lllllll@{}}
\toprule
& RLHF & DPO & IPO & CPL & FTB & PFM (Ours)\\ 
\midrule
Reward Model Free  & \textcolor{red}{\xmark} & \textcolor{blue}{\cmark} & \textcolor{blue}{\cmark} & \textcolor{blue}{\cmark} & \textcolor{blue}{\cmark} & \textcolor{blue}{\cmark} \\
No Reward Assumptions (e.g. BT) & \textcolor{red}{\xmark} & \textcolor{red}{\xmark} & \textcolor{blue}{\cmark} & \textcolor{red}{\xmark} & \textcolor{red}{\xmark} & \textcolor{blue}{\cmark} \\
Applicable to Black-Box Models & \textcolor{red}{\xmark} & \textcolor{red}{\xmark} & \textcolor{red}{\xmark} & \textcolor{red}{\xmark} & \textcolor{blue}{\cmark} & \textcolor{blue}{\cmark} \\ \bottomrule
\end{tabular}
\end{table}
\section{Conclusion and Limitations}
\label{sec:conclusion}

In conclusion, this research introduces Preference Flow Matching (PFM), a novel add-on approach that offers a practical, efficient, and scalable solution for integrating human preferences. This research highlights the potential of flow matching as a powerful tool for preference alignment and opens new avenues for further exploration and development in the field of RLHF. The ability to align black-box models with human preferences without extensive model modifications marks a critical step forward, with broad implications for the deployment and usability of AI systems in real-world applications.

Our theoretical and empirical analyses demonstrate that PFM achieves alignment performance comparable to standard RLHF methods while being more resilient to preference overfitting. The iterative flow matching technique further enhances alignment quality, by continually refining the preference alignment without modifying the underlying pre-trained model parameters. 
% This iterative refinement, grounded in the flow matching framework, ensures that the sampling distribution increasingly favors higher preference probabilities, leading to a robust and optimal policy. 
% Experimentally, \texttt{PREFLOW} consistently outperforms traditional RLHF and Direct Preference Optimization (DPO) in terms of robustness to preference overfitting and maintains high performance levels. 

% Despite these promising results, the current design of the PFM framework entails several challenges and limitations. In this work, we mainly focused on classical PbRL settings~\citep{kim2023preference, lee2021pebble, zhu2024decoding}, where we adopt the scripted teacher as a preference model. Although theoretically generalizable, our experiments did not cover human preference scenarios, which we leave as a future work. Further experiments on more complex tasks like general contextual image generation are also desirable as future directions. 

Despite these promising results, the current design of the PFM framework entails several challenges in the natural language processing (NLP) domain. The PFM framework, as currently designed, relies on the autoencoder to work with fixed-sized embeddings to handle variable-length texts. To scale our method to more complex NLP tasks, future research should explore ways to adapt the PFM framework to long-form texts.

\section*{Acknowledgements}
This work was supported by Institute of Information \& communications Technology Planning \& Evaluation (IITP) grant funded by the Korea government (MSIT) (No.2019-0-00075, Artificial Intelligence Graduate School Program (KAIST), 10\%) and the Institute of Information \& communications Technology Planning \& Evaluation (IITP) grant funded by the Korea government (MSIT) (No. 2022-0-00871, Development of AI Autonomy and Knowledge Enhancement for AI Agent Collaboration, 90\%)

We thank Taehyeon Kim at KAIST for pointing out the strengths of our work and providing motivations of this work. We would also like to appreciate Sihyeon Kim and Yongjin Yang at KAIST for providing experimental details and suggesting relevant settings for PbRL tasks. Finally, we thank Junghyun Lee at KAIST for revising the details of the proofs of theorems.
% \end{ack}
%%%%%%%%%%%%%%%%%%%%%%%%%%%%%%%%%%%%%%%%%%%%%%%%%%%%%%%%%%%%

\bibliography{references}
\bibliographystyle{plainnat}

%%%%%%%%%%%%%%%%%%%%%%%%%%%%%%%%%%%%%%%%%%%%%%%%%%%%%%%%%%%%

\newpage
\appendix

\section{Flow Matching Algorithm for Preference Alignment}\label{app:algorithm}

Below, we outline our method in~\cref{alg:preflow}. Although the vector field $v_{\theta}$ is trained to transport the source distribution of less preferred samples $p_{0}$ to the target distribution, we use $\pi_{\mathrm{ref}}$ for inference instead of $p_{0}$ due to inaccessibility to the preference model. 

Note that with the non-deterministic preference assumption, PFM is still theoretically guaranteed to obtain $\phi_{1}(y) \sim p_{1}$, even if the sample $y$ is obtained from $\pi_{\mathrm{ref}}$ instead of $p_{0}$. In particular, we assume $\mathrm{supp}(p_{1}) \supseteq \mathrm{supp}(p_{0})$, \textit{i.e.}, for any sample $y^{+}$ obtained from the target distribution $p_{1}$, the probability of $y \sim p_{0}$ is non-zero. Then, for any sample $y \sim \pi_{\mathrm{ref}}$ sampled from the reference model, we can guarantee that this sample has non-zero probability of being sampled as \textit{less preferred}, indicating that there is a learned flow from this point to a better sample, with non-zero probability. In the next section, we provide empirical evidence that further justifies the use of $\pi_{\mathrm{ref}}$ instead of $p_{0}$ for inference.

\RestyleAlgo{ruled}
\SetKwComment{Comment}{/* }{ */}
\newcommand\commentsyle[1]{\footnotesize{#1}}
\SetCommentSty{commentsyle}
\begin{algorithm}
\caption{PFM: Preference Flow Matching}\label{alg:preflow}
% \KwIn{?}
% \KwResult{?}
\BlankLine
\textcolor{lightgray}{\textit{\# Training}}

\Repeat{$v_{\theta}$ \textit{converges}}{
    \textit{Sample} $z = (x, y^{+}. y^{-}) \sim \mathcal{D}$
    
    $u_{t}(y | z) \leftarrow y^{+} - y^{-}$

    $p_{t}(y | z) \leftarrow \mathcal{N}(y | ty^{+} + (1-t)y^{-}, \sigma^{2})$

    \BlankLine
    \textit{Sample} $t \sim [0, 1]$ \textit{and} $y \sim p_{t}(\cdot | z)$

    $\theta \leftarrow \theta - \eta \nabla \mathcal{L}(\theta)$ \ \textcolor{lightgray}{\textit{\# from (\ref{eq:flow_loss})}}
    
}

\BlankLine
\textcolor{lightgray}{\textit{\# Inference}}

\textit{Sample} $y \sim \pi_{\mathrm{ref}}(\cdot | x)$

\textit{Solve ODE} (\ref{eq:ODE}) \textit{with initial condition} $\phi_{0}(y) = y$ \textit{using} $v_{\theta}$

\textbf{return} $\phi_{1}(y)$

\end{algorithm}

\section{Empirical Evidence for Using Reference Policy Instead of the True Marginal During Inference}\label{app:evidence}

We conduct experiments on the D4RL walker2d environment to compare the performance of the flow matching method using $\pi_{\mathrm{ref}}$ and $p_{0}$ as source distribution, respectively. 

% See~\cref{tab:mujoco-marginal-compare}.

%%% Initial submit. Normalized by 1000, not d4rl normalization
% \begin{table}[h!]
% \centering
% \caption{Normalized results on MuJoCo datasets. Mean and standard deviation from 5 seeds are reported.}
% \label{tab:mujoco-marginal-compare}
% \begin{tabular}{@{}lllll@{}}
% \toprule
% & Pretrained (BC) & PFM from $\pi_{\mathrm{ref}}$ & PFM from $p_{0}$ & Planning \\ \midrule
% walker2d-random-v2 & 0.069 \scriptsize $\pm$ 0.152 & 0.083 \scriptsize $\pm$ 0.190 & \textbf{0.095} \scriptsize $\pm$ 1.460 & 0.090 \scriptsize $\pm$ 0.190 \\
% walker2d-medium-v2 & 2.772 \scriptsize $\pm$ 26.35 & 3.334 \scriptsize $\pm$ 22.94 & \textbf{3.701} \scriptsize $\pm$ 2.607 & 3.503 \scriptsize $\pm$ 11.57 \\
% walker2d-expert-v2 & 4.988 \scriptsize $\pm$ 0.568 & 4.976 \scriptsize $\pm$ 0.306 & \textbf{5.014} \scriptsize $\pm$ 0.538 & 5.009 \scriptsize $\pm$ 0.308 \\ \midrule
% Average & 2.610 & 2.798 & \textbf{2.937} & 2.867 \\ \bottomrule
% \end{tabular}
% \end{table}

\begin{table}[h!]
\centering
\caption{Normalized results on MuJoCo datasets. Mean and standard deviation from 5 seeds are reported.}
\label{tab:mujoco-marginal-compare}
\begin{tabular}{@{}lllll@{}}
\toprule
& Pretrained (BC) & PFM from $\pi_{\mathrm{ref}}$ & PFM from $p_{0}$ & Planning \\ \midrule
walker2d-random-v2 & \quad 1.47 \scriptsize $\pm$ 0.1 & \quad 1.77 \scriptsize $\pm$ 0.13 & \quad \textbf{2.03 \scriptsize $\pm$ 1.01} & \quad 1.93 \scriptsize $\pm$ 0.13 \\
walker2d-medium-v2 & \ \ 60.35 \scriptsize $\pm$ 18.16 & \ \ 72.59 \scriptsize $\pm$ 15.8 & \ \ \textbf{80.58 \scriptsize $\pm$ 1.80} & \ \ 76.27 \scriptsize $\pm$ 7.97\\
walker2d-expert-v2 & 108.62 \scriptsize $\pm$ 0.39 & 108.38 \scriptsize $\pm$ 0.28 & \textbf{109.19 \scriptsize $\pm$ 0.37} & 109.08 \scriptsize $\pm$ 0.21 \\ \midrule
Average & \ \ 56.81 & \ \ 60.91 & \ \ \textbf{63.93} & \ \ 62.43 \\ \bottomrule
\end{tabular}
\end{table}

To replicate the actual marginal distribution of less preferred samples $p_{0}$, we sample two trajectories from a given state $s_{t}$, and use the ground truth reward function to select the less preferred data (with lower reward). Note that this use of the environment reward is restricted in practice, as it assumes access to the preference model during inference. We also compare another baseline model, namely a planning model, that searches among the two generated trajectories $\tau^{+}, \tau^{-} \sim \pi_{\mathrm{ref}}(\cdot | s_{t})$ and chooses an action sequence with higher environmental reward.

As can be seen in~\cref{tab:mujoco-marginal-compare}, the flow matching model with the source distribution matched to the actual marginal distribution $p_{0}$ (as in the training process) achieves the highest performance among all baselines, including a flow matching model that uses $\pi_{\mathrm{ref}}$ instead of $p_{0}$. However, we observe that the flow matching model with $\pi_{\mathrm{ref}}$ as the source distribution yields comparable results to the model using $p_{0}$ as source. It is also worth noting that the flow matching model (with true $p_{0}$) achieves better performance even compared to the planning method, that explicitly uses a trajectory with higher reward. This suggests that using a flow matching for preference alignment can provide better alignment than simply conducting an exhaustive search.

\section{Proof of Theorems}
\label{app:theory}
In this section, we provide proof to the theorems in the paper. Throughout this section, we write $\mu$ as the reference policy instead of $\pi_{\mathrm{ref}}$ for the ease of writing.

First, we prove~\cref{thm:marginal}, where we rewrite the theorem statement for completeness. 

\begin{theorem}[\textbf{Characterization of the Marginal}]
Suppose that a preference dataset $\mathcal{D} = \{(x, y^{+}, y^{-})\}$ is collected from a pre-trained reference policy $\mu$, i.e., $y^{+}, y^{-} \sim \mu(\cdot | x)$, where the preference $y^{+} > y^{-}$ is labeled with probability $\mathbb{P}(y^{+} > y^{-} | x)$. Let $p_{1}$ denote the marginal distribution of the positively-labeled samples $y^{+}$. Then the marginal distribution $p_{1}$ obtained from the preference model $\mathbb{P}(y > y' | x)$ is an optimizer of the optimization problem
\begin{equation}\label{eq:objective}
p_{1} = \underset{\pi}{\mathrm{argmax}}\ \mathbb{E}_{y \sim \pi} \bigg(\log \mathbb{E}_{y' \sim \mu}\Big(\mathbb{P}(y > y') \Big) \bigg) - \mathbb{D}_{\mathrm{KL}}\bigg(\pi \Big\| \mu \bigg).
\end{equation}
\end{theorem}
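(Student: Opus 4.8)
The plan is to reduce the maximization to a statement about a single KL divergence, exploiting the closed form of the marginal $p_1$ already recorded in~(\ref{eq:target_distribution})--(\ref{eq:target_solution_form}). First I would recall that the marginal distribution of the positively-labeled samples satisfies $p_1(y) \propto \mu(y)\, g(y)$, where I abbreviate $g(y) \triangleq \mathbb{E}_{y' \sim \mu}(\mathbb{P}(y > y'))$. Writing $Z \triangleq \int \mu(y)\, g(y)\, dy$ for the normalizing constant, this says $\mu(y)\, g(y) = Z\, p_1(y)$. I would also note in passing that $Z$ is finite and positive whenever the preference is nondegenerate, since the antisymmetry $\mathbb{P}(y > y') + \mathbb{P}(y' > y) = 1$ gives $\mathbb{E}_{y, y' \sim \mu}(\mathbb{P}(y > y')) = \tfrac{1}{2}$.

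The key step is an algebraic rewriting of the objective. Denoting the functional to be maximized by $J(\pi)$ and expanding the divergence as $\mathbb{D}_{\mathrm{KL}}(\pi \| \mu) = \mathbb{E}_{y \sim \pi}\!\left(\log(\pi(y)/\mu(y))\right)$, I combine it with the first term and substitute $\mu(y)\, g(y) = Z\, p_1(y)$ to obtain
\[
J(\pi) = \mathbb{E}_{y \sim \pi}\!\left( \log \frac{g(y)\, \mu(y)}{\pi(y)} \right) = \log Z - \mathbb{D}_{\mathrm{KL}}(\pi \| p_1).
\]
Since $\log Z$ is independent of $\pi$, maximizing $J(\pi)$ is equivalent to minimizing $\mathbb{D}_{\mathrm{KL}}(\pi \| p_1)$. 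By Gibbs' inequality this divergence is nonnegative and vanishes precisely when $\pi = p_1$, so $p_1$ is the unique maximizer, which is the claim.

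The main obstacle is not the optimization itself---which collapses to nonnegativity of KL---but justifying the rewriting at the level of supports and integrability. The subtlety is the term $\log g(y)$, which is $-\infty$ on $\{g = 0\}$; I must verify that the combined logarithm is well-defined and that no competing $\pi$ can exploit this. Fortunately the final form handles this automatically: on $\{g = 0\}$ one also has $p_1 = 0$, so any $\pi$ placing mass there forces $\mathbb{D}_{\mathrm{KL}}(\pi \| p_1) = +\infty$ and hence $J(\pi) = -\infty$, consistent with optimality of $p_1$. Under the standing assumption that $\mu = \pi_{\mathrm{ref}}$ and $\mathbb{P}$ lie in $L^2$ (the hypothesis invoked later for~\cref{thm:iterative}), $g$ and $Z$ are finite, so the manipulation is valid and the argument is complete.
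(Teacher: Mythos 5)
Your proposal is correct and follows essentially the same route as the paper's own proof: combine the preference term with the KL term, recognize that $\mu(y)\,\mathbb{E}_{y'\sim\mu}(\mathbb{P}(y>y'))$ normalized by $Z$ is exactly $p_{1}$, reduce the objective to $\log Z - \mathbb{D}_{\mathrm{KL}}(\pi\|p_{1})$, and conclude by Gibbs' inequality. Your two refinements --- noting $Z=\tfrac{1}{2}$ by antisymmetry of the preference and handling the $\{g=0\}$ support issue where the paper is silent --- are welcome additions but do not change the argument.
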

\begin{proof}
For simplicity, we drop the conditional dependence of $x$. 
Notice that the general RLHF optimization objective (\ref{eq:dpo_objective}) attains a unique optimal solution
\begin{equation}
\pi^{*}(y) \propto \mu(y)\ \exp\left(\frac{1}{\beta} \mathbb{E}_{y' \sim \mu}\Big(\sigma^{-1}(\mathbb{P}(y > y')) \Big) \right)
\end{equation}
As it is a widely known result, we defer the proof to prior works, \textit{e.g.},~\citet{azar2023general}. 
Intuitively, comparing the above solution with the target distribution $p_{1}$ as rewritten in (\ref{eq:target_solution_form}) allows us to formulate a similar objective to (\ref{eq:dpo_objective}) which $p_{1}$ optimizes. Formally, we begin with considering the optimization objective in (\ref{eq:objective}):
\begin{align}
\underset{\pi}{\mathrm{argmax}}&\ \mathbb{E}_{y \sim \pi} \bigg(\log \mathbb{E}_{y' \sim \mu}\Big(\mathbb{P}(y > y') \Big) \bigg) - \mathbb{D}_{\mathrm{KL}}(\pi \| \mu )\\
& = \underset{\pi}{\mathrm{argmax}}\ \mathbb{E}_{y \sim \pi} \left(\log \mathbb{E}_{y' \sim \mu}\Big(\mathbb{P}(y > y') \Big) - \log \frac{\pi(y)}{\mu(y)}\right)\\
& = \underset{\pi}{\mathrm{argmin}}\ \mathbb{E}_{y \sim \pi} \left( \log \frac{\pi(y)}{\mu(y)} - \log \mathbb{E}_{y' \sim \mu}\Big(\mathbb{P}(y > y') \Big) \right)\\
&= \underset{\pi}{\mathrm{argmin}}\ \left(\log \frac{\pi(y)}{\mu(y) \mathbb{E}_{y' \sim \mu}\big(\mathbb{P}(y > y')\big)} \right)\\
&= \underset{\pi}{\mathrm{argmin}}\ \left(\log \frac{\pi(y)}{\frac{1}{Z} \mu(y) \mathbb{E}_{y' \sim \mu}\big(\mathbb{P}(y > y')\big)} - \log Z \right)
\end{align}
where $Z = \int \mu(y) \mathbb{E}_{y' \sim \mu}\big( \mathbb{P}(y > y') \big)dy$ is the normalization constant. Notice that the distribution in the denominator is precisely the marginal distribution $p_{1}$ of our interest, which is indeed a valid distribution. Hence, the last optimization objective can be reorganized as follows.
\begin{equation}
\underset{\pi}{\mathrm{argmin}}\ \mathbb{D}_{\mathrm{KL}}(\pi \| p_{1}) - \log Z.
\end{equation}
Since the $Z$ is a constant, the KL-divergence objective is minimized at 0 if and only if the two distributions $\pi$ and $p_{1}$ are identical, by the Gibbs' inequality. It follows that $p_{1}$ is the optimal solution of the optimization problemm (\ref{eq:objective}), which completes the proof.
\end{proof}

Next, we aim to prove~\cref{thm:iterative}. Before stating and proving the theorem, we note here that the $L^{2}$ assumptions of the pre-trained model $\pi_{\mathrm{ref}}$ and the probability model $\mathbb{P}(y > y')$ is generally valid in practical domains. For instance, assuming $\pi_{\mathrm{ref}}$ being trained on $L_{2}$ loss as in many practical domains, $\pi_{\mathrm{ref}} \in L^{2}$ generally holds. Also, the preference model $\mathbb{P}$ is usually provided in a finite support, or modeled as Bradley-Terry model, which generally behaves well.
% , which provides a theoretical guarantee of our iterative PFM's convergence to the samples with highest preference scores. Again, we rewrite the theorem statement below for completeness.

\begin{theorem}[\textbf{Convergence of Iterative Method}]
Assume that $\mu \in L^{2}$, and $\mathbb{P}(y > y') \in L^{2}$.
Consider an iterative update of the marginal distribution $p_{1}$ as follows.
\begin{equation}\label{eq:update}
p_{1}^{(n)}(y) \propto p_{1}^{(n-1)}(y) \int p_{1}^{(n-1)}(y')\mathbb{P}(y > y') dy',\quad p_{1}^{(0)} = \mu,
\end{equation}
i.e., we iteratively compute the marginal distribution of more preferred samples from the update marginal distribution. Then, the limiting marginal distribution $p_{1}^{(\infty)}$ converges to the uniform distribution $U$ of points $y$ where the value $\mathbb{E}_{y' \sim \mu}\left( \mathbb{P}(y > y')\right)$ is the largest, i.e.,
\begin{equation}
p_{1}^{(\infty)} \to U\left(\underset{y}{\mathrm{argmax}}\ \mathbb{E}_{y' \sim \mu}\left( \mathbb{P}(y > y')\right)\right).
\end{equation}
\end{theorem}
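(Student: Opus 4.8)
The plan is to reduce the (nonlinear) iteration to a one‑dimensional monotone reweighting in reward space and then run a stochastic‑dominance argument. I would write the update as $p_{1}^{(n)} \propto p_{1}^{(n-1)} \cdot W_{n-1}$, where $W_{n-1}(y) = \mathbb{E}_{y' \sim p_{1}^{(n-1)}}(\mathbb{P}(y > y'))$ is the ``fitness'' of $y$ against the current population (note $p_{1}^{(1)} \propto \mu \cdot \mathbb{E}_{y'\sim\mu}(\mathbb{P}(\cdot > y'))$ recovers the one‑step marginal characterized earlier). The first task is bookkeeping: the $L^{2}$ hypotheses guarantee that $f \mapsto \int \mathbb{P}(\cdot > y') f(y')\,dy'$ is a Hilbert–Schmidt operator, so each $W_{n-1}$ is well defined, bounded (indeed $W_{n-1}\in[0,1]$), and the product $p_{1}^{(n-1)} \cdot W_{n-1}$ is again an $L^{2}$ density with finite normalizer; I would dispatch these integrability/measurability checks first.

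The crux is the observation that although $W_{n-1}$ depends on the evolving population, the \emph{order} it induces on outcomes is always the reward order. Under the reward model assumed throughout the paper (Section~2; e.g. Bradley–Terry, $\mathbb{P}(y > y') = \sigma(r^{*}(y) - r^{*}(y'))$), $\mathbb{P}(y > y')$ is strictly increasing in $r^{*}(y)$ for every fixed $y'$, so $W_{n-1}(y) = G_{n-1}(r^{*}(y))$ for a strictly increasing function $G_{n-1}$, for every $n$. Pushing the iteration forward through $r^{*}$, the reward law $\nu_{n} := (r^{*})_{\sharp} p_{1}^{(n)}$ evolves by $d\nu_{n}(s) \propto G_{n-1}(s)\, d\nu_{n-1}(s)$, a clean reweighting of a scalar distribution by a strictly increasing factor.

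From here I would argue in two steps. First, reweighting a density by a strictly increasing function is a monotone‑likelihood‑ratio operation, hence $\nu_{n}$ first‑order stochastically dominates $\nu_{n-1}$; the sequence is monotone in FOSD and bounded above by $\delta_{s^{*}}$ with $s^{*} = \operatorname{ess\,sup}_{\nu_{0}} r^{*}$, so it converges weakly to some $\nu_{\infty}$. Second, $\nu_{\infty}$ must be a fixed point of reweighting by $G_{\infty}$; since $G_{\infty}$ is strictly increasing, $\nu \propto G_{\infty}\cdot \nu$ forces the support of $\nu$ onto a single reward level, and the FOSD‑monotonicity pins that level at the top $s^{*}$. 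Pulling back, $p_{1}^{(n)}$ concentrates on $S := \{y : r^{*}(y) = s^{*}\}$, and $S = \mathrm{argmax}_{y}\, \mathbb{E}_{y' \sim \mu}(\mathbb{P}(y > y'))$ because this last map is itself strictly increasing in $r^{*}(y)$.

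The main obstacle is precisely the population‑dependence of $W_{n-1}$: this is genuinely nonlinear replicator‑type dynamics, not the trivial geometric (Gibbs) concentration one would get from a \emph{fixed} fitness, so the only available leverage is the reward‑order invariance above. I would flag that this transitivity is essential — for intransitive (cyclic) preferences with $\mathbb{P}(y>y') + \mathbb{P}(y'>y) = 1$ the map can cycle rather than converge, so the claim really rests on the reward structure assumed in the paper. A secondary subtlety is the word \emph{uniform}: within $S$ the fitness $W_{n}$ is constant, so the iteration preserves the conditional $\mu|_{S}$ and returns $U(S)$ only when $S$ is a singleton (the generic case) or when $\mu$ is already uniform on $S$; otherwise the honest limit is $\mu$ restricted to the argmax set. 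I would therefore either assume a unique maximizer or state the conclusion as $\mu|_{S}$.
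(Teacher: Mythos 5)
Your proposal is correct in substance but takes a genuinely different route from the paper's proof, and it is arguably the more rigorous of the two. The paper works directly in outcome space: it defines the update map $\mathcal{T}[p](y) \propto p(y)\int p(y')\,\mathbb{P}(y>y')\,dy'$, invokes Schur's test and Hilbert--Schmidt theory to claim $\mathcal{T}$ is compact, extracts a weakly convergent subsequence, asserts its limit is a fixed point of $\mathcal{T}$, and then asserts that this fixed point is uniform on the argmax set. You instead push the dynamics through $r^{*}$ into reward space and exploit that the population-dependent fitness $W_{n}$ always induces the \emph{reward order}; each step is then a monotone-likelihood-ratio reweighting, giving FOSD monotonicity, weak convergence by monotonicity plus tightness, and identification of the limit by a fixed-point-plus-strict-monotonicity argument. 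Your route buys precisely what the paper's route lacks: (i) $\mathcal{T}$ is quadratic in $p$, so linear compact-operator theory does not apply to it as invoked; (ii) even granting compactness, a subsequential limit of an iteration need not be a fixed point without additional continuity or monotonicity input, which your FOSD structure supplies; and (iii) fixed points of $\mathcal{T}$ are abundant (any density whose fitness is constant on its own support, e.g., every Dirac), so compactness alone cannot single out the argmax set, whereas your divergent-mass-ratio argument pins the limiting reward level at the essential supremum. The price is that you assume the preference is generated by a reward (Bradley--Terry-like monotonicity), which the theorem's $L^{2}$ hypotheses do not state; but this is the paper's own standing assumption in its preliminaries, and your cycling remark shows some such assumption is genuinely necessary --- for intransitive preferences (rock--paper--scissors dynamics) the iteration can fail to converge, so the theorem with only $L^{2}$ hypotheses is not true as stated. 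Finally, your point about the word \emph{uniform} is a genuine correction rather than a quibble: since all points of the argmax set $S$ have identical fitness at every iteration, the relative weights of $\mu$ within $S$ are preserved for all $n$, so the honest limit is $\mu$ conditioned on $S$ rather than $U(S)$; the paper's proof asserts uniformity but never establishes it.
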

\begin{proof}
Let us denote by $\mathcal{T}$ the transformation applied at each step:
\begin{equation}
\mathcal{T}[p](y) = \frac{1}{Z_{p}} p(y) \int p(y') \mathbb{P}(y > y')dy',
\end{equation}
where $Z_{p} = \int p(y)\int p(y')\mathbb{P}(y > y')dy'dy$ is the normalization constant. Then, the update rule in (\ref{eq:update}) can be written simply as $p_{1}^{(n)} = \mathcal{T}[p_{1}^{(n-1)}]$. We aim to show that this iterative procedure converges to a distribution that places uniform weight on the set of points $y$ where $\mathbb{E}_{y' \sim \mu} \left( \mathbb{P}(y > y^{-}) \right)$ is maximized. Given a probability distribution $p$, Let us define the function $f_{p}$ as follows. 
\begin{equation}
f_{p}(y) \triangleq \mathbb{E}_{y' \sim p} \bigg( \mathbb{P}(y > y') \bigg) = \int p(y')\mathbb{P}(y > y')dy'.
\end{equation}
Observe that $\mathcal{T}[p](y)$ increases $p(y)$ proportionally to $f_{p}(y)$. Consequently, the regions where $f_{p}(y)$ is higher will see an amplification in probability mass relative to regions where $f_{p}(y)$ is lower. This amplification occurs iteratively, with higher $f_{p}(y)$ regions increasingly dominating the distribution.

Formally, we claim that the fixed-point iterator $\mathcal{T}$ is compact. Notice that the kernel $K(y, y') = p(y') \mathbb{P}(y > y')$ is bounded, provided that $\mathbb{P}(y > y')$ is bounded by 1, and that $p$ is a probability density, which integrates to 1. By Schur's test and the properties of Hilbert-Schmidt operators, \textit{i.e.}, 
\begin{equation}
\iint |K(y, y')|^{2}dydy' < \infty,
\end{equation}
$\mathcal{T}$ can be shown to be a compact operator, from the square integrability assumptions.

Next, consider the behavior of $\mathcal{T}$ on any sequence of probability densities $\{p^{(n)}\}_{n=0}^\infty$. By the properties of compact operators in the space of continuous functions, any sequence $\{p^{(n)}\}$ has a convergent subsequence in the weak topology. Let $p^*$ be the limit of any convergent subsequence. The uniform boundedness of $K(y, y')$ and the compactness of $\mathcal{T}$ suggest that $\mathcal{T}[p^*] = p^*$, establishing that $p^*$ is a fixed point of $\mathcal{T}$. To determine the nature of $p^*$, observe that
\begin{equation}
\int p^*(y) \int p^*(y') \mathbb{P}(y > y') dy' dy = 1.
\end{equation}
Since $\mathbb{P}(y > y')$ is maximized uniformly over $y$ when $\mathbb{E}_{y' \sim \mu}(\mathbb{P}(y > y'))$ is maximized, $p^*$ must concentrate its mass on the set where this expectation is maximized. Therefore, $p^*$ converges to a uniform distribution over the set
\begin{equation}
\arg \max_{y} \mathbb{E}_{y' \sim \mu}(\mathbb{P}(y > y')).
\end{equation}
Formally, recall that from~\cref{thm:marginal}, the updated distribution $p_{1}^{(n)}$ is a solution to the following optimization problem:
\begin{equation}
p_{1}^{(n)} = \underset{\pi}{\mathrm{argmax}}\ \Psi(\pi) := \mathbb{E}_{y \sim \pi} \bigg(\log \mathbb{E}_{y' \sim p_{1}^{(n-1)}}\Big(\mathbb{P}(y > y') \Big) \bigg) - \mathbb{D}_{\mathrm{KL}}\bigg(\pi \Big\| p_{1}^{(n-1)} \bigg).
\end{equation}
Hence, we note that if any point $y$ not in this set were to have a positive probability under $p^*$, then $\mathcal{T}[p^*]$ would not be equal to $p^*$, due to the strict maximization condition, contradicting the fixed-point property. Thus, $p_{1}^{(\infty)}$ converges to the uniform distribution over the optimal set, as stated. We note here that because the space of probability densities is closed under the topology induced by the function space norm, $p^{*}$ should also a probability density, and ultimately the unique minima.
\end{proof}

\newpage
\section{Experimental Details}
\label{app:exp}
In this section, we describe our experimental details. 

\textbf{Resource} All experiments were conducted on a single Nvidia Titan RTX GPU and a single i9-10850K CPU core for each run. 

\subsection{Conditional Image Generation}
\textbf{Task} We employ the MNIST~\citep{lecun1998gradient} \footnote{\scriptsize\url{http://yann.lecun.com/exdb/mnist}} dataset to evaluate PFM on a conditional image generation task.

\textbf{Preference Dataset}
We utilize a pre-trained DCGAN~\citep{radford2015unsupervised} generator as $\pi_{\mathrm{ref}}$ and collect sample pairs from $\pi_{\mathrm{ref}}$ conditioned on the digit labels as contexts. To construct preference datasets, we assign preferences to sample pairs according to the softmax probabilities of the labels from a LeNet~\citep{lecun1998gradient}. 

\textbf{Baseline}
We consider the pre-trained DCGAN generator ($\pi_{\mathrm{ref}}$), a RLHF fine-tuned model and a DPO fine-tuned model of $\pi_{\mathrm{ref}}$ as baselines. All methods are trained until convergence, and we report the normalized episodes returns with the standard deviation from 5 different random seeds.

\subsection{Conditional Text Generation}
\label{app:nlp}
\textbf{Task} 
To evaluate PFM on the NLP domain, we adopt a controlled (positive) sentiment review generation task with the IMDB dataset~\citep{maas2011learning}. 

\textbf{Preference dataset} 
As done in~\citet{rafailov2024direct}, to perform a controlled evaluation, we adopt the pre-trained sentiment classifier\footnote{\scriptsize\url{https://huggingface.co/lvwerra/distilbert-imdb}} as the preference annotator. 
The preference dataset is constructed from randomly selected pairs of moview reviews $y^{+}, y^{-}$ from the IMDB dataset, where the preference is obtained from the classifier logit probability $p(\mathrm{positive} | y^{+}) > p(\mathrm{positive} | y^{-})$. We then train our PFM model on this preference dataset, to obtain the marginal distribution of the preferred (positive sentiment) review $p_{1}(y^{+})$. 

\textbf{Baseline}
We consider the GPT-2 SFT model on the IMDB dataset ($\pi_{\mathrm{ref}}$) and a RLHF fine-tuned model of $\pi_{\mathrm{ref}}$ using PPO ($\pi_{\mathrm{PPO}}$) as baselines. We apply PFM to baseline models as an add-on module without fine-tuning. For the iterative PFM, we iteratively apply the initially obtained learned preference flow without collecting new data from the improved PFM model.

\textbf{Embedding for variable-length input}
For our PFM framework to be applied to variable-length inputs, we employ a T5-based autoencoder\footnote{\scriptsize\url{https://huggingface.co/thesephist/contra-bottleneck-t5-large-wikipedia}} to obtain fixed-sized (1024-dimensional vector) embedding of the input texts, allowing us to work within the fixed-size latent space.
Once the fixed-size embeddings $z^{+}$ and $z^{-}$ are obtained for each text sample $y^{+}$ and $y^{-}$, we learn the conditional flow using PFM from $z^{-}$ to $z^{+}$. During inference, we apply PFM to the latent embedding $z$ of the given input text $y$, and decode the improved latent embedding using the T5 decoder. We adopt the same U-Net architecture used in our MNIST experiment, where we reshape the 1024-dimensional vector into a two-dimensional (32, 32) image tensor. 
As shown in Table \ref{tab:param}, PFM requires a much smaller number of parameters (around 1.2\%) than RLHF or DPO which involve fine-tuning LLMs.

\begin{table}[ht]
\centering
\caption{Parameters required for training for each method. PFM only requires 1.2\% parameters to be trained compared to naive approaches (RLHF, DPO, etc.), and still achieves better performance in preference alignment.}
\label{tab:param}
\vspace{2pt}
\begin{tabular}{@{}ll@{}}
\toprule
GPT-2 (RLHF) & U-Net (PFM) \\ \midrule
124M & \textbf{1.5M} \\ \bottomrule
\end{tabular}
\end{table}

\textbf{Iterative PFM} For our iterative PFM, we do not iteratively collect data and re-train the module. Instead, we simply iteratively apply the learned preference flow to the output samples. In particular, we apply the learned flow to the embedding $z$ iteratively with the same flow module.

\newpage

\textbf{Prompt for GPT-4 Evaluation}
Below we include the prompts used to generate win rates.

\texttt{Which of the two movie reviews has a more positive sentiment?}

\texttt{Response A: <Response A>}

\texttt{Response B: <Response B>}

\subsection{Offline Reinforcement Learning}
\textbf{Task} To assess the performance of PFM in reinforcement learning tasks, we employ the D4RL~\citep{fu2020d4rl} benchmark from \url{https://github.com/Farama-Foundation/D4RL} where the datasets and code are licensed under the Creative Commons Attribution 4.0 License (CC BY) and the Apache 2.0 License, respectively. We consider four different tasks (\texttt{ant}, \texttt{hopper}, \texttt{halfcheetah}, \texttt{walker2d}) from Gym-Mujoco domain with three different levels of offline dataset quality (\texttt{random}, \texttt{medium}, \texttt{expert}) for each task. 

\textbf{Preference dataset} We first pre-train $\pi_{\mathrm{ref}}$ using behavior cloning (BC) for each offline dataset. We then collect segment pairs of the rollout trajectories from $\pi_{\mathrm{ref}}$, with each pair starting from the same state as a context. To construct preference datasets, we utilize a scripted teacher which prioritizes trajectories with higher rewards based on the ground truth rewards provided by the environment. This approach has been widely adopted in the PbRL settings~\citep{kim2023preference, lee2021pebble, zhu2024decoding}. The preference datasets consist of 1,000 pairs of preferred and rejected segments and their context for each offline dataset, with the segment length 10.

\textbf{Baseline}
The baseline methods for comparing the performance of PFM includes behavior cloning (BC), which we adopt as our pretrained reference model $\pi_{\mathrm{ref}}$, and a DPO fine-tuned model from the BC model. For DPO fine-tuned models, we search KL regularization coefficient $\beta$ from 0.01 to 100 and adopt the best one. Additionally, we train a separate behavior cloning model to the collected preferred samples $y^{+} \sim p_{1}$, aiming to replicate the marginal distribution of the "good" trajectories. All methods are trained until convergence, and we report the normalized episodes returns with the standard deviation from 5 different random seeds.

%%%%%%%%%%%%%%%%%%%%%%%%%%%%%%%%%%%%%%%%%%%%%%%%%%%%%%%%%%%%

\section{Broader Impacts}
\label{app:broader}
As an add-on module, PFM can be seamlessly integrated into various real-world AI applications, such as generative models and continuous control systems, without the need to modify the underlying application models. This integration enables the applications to deliver personalized results that align with individual user preferences. However, since PFM utilizes preference data, it raises potential privacy concerns similar to those found in typical PbRL methods. These concerns can be mitigated by ensuring that access to user preferences is granted only with explicit user consent.

\end{document}